\documentclass{article} %
\usepackage{iclr2027_conference,times}
\usepackage[acronym,shortcuts]{glossaries} %
\glsdisablehyper %

\usepackage{amsmath,amsfonts,bm}

\newacronym{mfmc}{MFMC}{multi-fidelity Monte Carlo}
\newacronym{em}{EM}{expectation maximization}
\newacronym{ppo}{PPO}{proximal policy optimization}
\newacronym{sac}{SAC}{soft actor-critic}
\newacronym{drl}{DRL}{deep reinforcement learning}
\newacronym{il}{IL}{imitation learning}
\newacronym{rl}{RL}{reinforcement learning}
\newacronym{cv}{CV}{control variate}
\newacronym{mfpg}{MFPG}{multi-fidelity policy gradient}
\newacronym{pg}{PG}{policy gradient}
\newacronym{auc}{AUC}{area under curve}
\newacronym{hf}{HF}{high-fidelity}
\newacronym{lf}{LF}{low-fidelity}
\newacronym{utd}{UTD}{update-to-data}
\newacronym[longplural={Markov decision processes}]{mdp}{MDP}{Markov decision process}
\newacronym{gae}{GAE}{generalized advantage estimation}
\newacronym{ema}{EMA}{exponential moving average}
\newcommand{\probP}{\text{I\kern-0.15em P}}
\newcommand{\policy}{\pi}
\newcommand{\policyparams}{\theta}

\newcommand{\policyParamsDim}{d}

\newcommand{\state}{s}
\newcommand{\action}{a}
\newcommand{\trajectory}{\tau}

\newcommand{\mdp}{\mathcal{M}}
\newcommand{\stateSpace}{S}
\newcommand{\actionSpace}{A}
\newcommand{\initialState}{\state_{I}}
\newcommand{\initialStateDist}{\Delta_{\initialState}}

\newcommand{\given}{\mid}

\newcommand{\RV}{X}

\newcommand{\high}{\mathrm{high}}
\newcommand{\low}{\mathrm{low}}

\newcommand{\samplecost}{w}
\newcommand{\samplecosth}{\samplecost^{\high}}
\newcommand{\samplecostl}{\samplecost^{\low}}

\def\eqref#1{equation~\ref{#1}}

\def\1{\bm{1}}

\DeclareMathAlphabet{\mathsfit}{\encodingdefault}{\sfdefault}{m}{sl}
\SetMathAlphabet{\mathsfit}{bold}{\encodingdefault}{\sfdefault}{bx}{n}

\newcommand{\E}{\mathbb{E}}

\newcommand{\R}{\mathbb{R}}

\newcommand{\Var}{\mathrm{Var}}

\newcommand{\Cov}{\mathrm{Cov}}

\RequirePackage{xcolor}
\newcommand{\hfc}[1]{\textcolor{black}{#1}}
\newcommand{\lfc}[1]{\textcolor{black}{#1}}

\usepackage{hyperref}
\usepackage{url}
\usepackage[most]{tcolorbox}
\usepackage{amsthm}
\theoremstyle{plain}
\newtheorem{lemma}{Lemma}
\theoremstyle{remark}

\usepackage{algorithm}
\usepackage{algpseudocode}
\usepackage{subcaption}
\usepackage{booktabs,array}
\usepackage[capitalize]{cleveref} %

\graphicspath{{./Images/}}%
\title{Multi-Fidelity Policy Gradients Stabilize Data-Scarce Reinforcement Learning}

\author{Xinjie Liu \\
University of Texas at Austin \\
\texttt{xinjie-liu@utexas.edu} \\
\And
\makebox[\dimexpr0.5\textwidth-2\tabcolsep\relax][l]{Ruihan Zhao} \\
University of Texas at Austin \\
\texttt{ruihan.zhao@utexas.edu} \\
\AND
Anirban Chaudhuri \\
University of Texas at Austin \\
\texttt{anirbanc@oden.utexas.edu} \\
\And
\makebox[\dimexpr0.5\textwidth-2\tabcolsep\relax][l]{Cyrus Neary} \\
University of British Columbia \\
\texttt{cyrus.neary@ubc.ca} \\
\AND
Ufuk Topcu \\
University of Texas at Austin \\
\texttt{utopcu@utexas.edu} \\
\end{tabular}\hfill\begin{tabular}[t]{l}\bf\rule{0pt}{24pt}\ignorespaces
\makebox[\dimexpr0.5\textwidth-2\tabcolsep\relax][l]{David Fridovich-Keil} \\
University of Texas at Austin \\
\texttt{dfk@utexas.edu}
}

\iclrfinalcopy %
\begin{document}

\maketitle
\ificlrfinal\lhead{Preprint}\fi
\vspace{-0.25in}
\begin{center}
\textbf{Project page:} \url{https://xinjie-liu.github.io/mfpg-ppo/}
\end{center}
\vspace{0.05in}

\begin{abstract}

Policy gradient methods for on-policy \ac{rl} can become unstable when expensive, scarce target-domain data yield noisy gradient estimates. We address this challenge by complementing limited \emph{\ac{hf}} target-domain data with abundant, cheap, but biased \emph{\ac{lf}} data, e.g., from a simplified simulator.
Most existing methods directly optimize biased objectives based on \ac{lf} data. In contrast,
the recently introduced \ac{mfpg} framework uses \ac{lf} data solely to construct a control variate that reduces variance and improves \ac{hf} data efficiency without biasing the policy gradient estimator. 
However, published work on \ac{mfpg} is limited to REINFORCE on small-scale simulation tasks. 
We develop \ac{mfpg} for modern actor-critic learning in GPU-parallel simulation and on a physical robot.  
Our analysis and experiments show that naive extensions to \ac{ppo} can lose cross-fidelity correlation or inflate variance. 
Our \acs{mfpg}-\acs{ppo} addresses these failures by redesigning the sampling, advantage estimation, and control variate construction to preserve cross-fidelity correlation, and by monitoring estimator uncertainty to prevent variance inflation.
We also introduce a budget-aware \ac{mfpg}-\ac{ppo} to divide a fixed sampling budget among high- and low-fidelity data sources. Across simulated robot locomotion tasks of varying \ac{lf}-to-\ac{hf} transfer difficulty and \ac{hf} data budgets, \ac{mfpg}-\ac{ppo} improves upon \ac{ppo} trained on \ac{hf} data alone in nearly all settings, and consistently matches the performance of \ac{ppo} trained with $16\times$ more \ac{hf} data on the hardest task at the smallest \ac{hf} budgets. In contrast, most baselines that use \ac{lf} data perform well only where direct \ac{lf}-to-\ac{hf} transfer succeeds. \ac{mfpg}-\ac{ppo} enables stable learning on a physical Franka arm using only 4 real-robot episodes per update and no human demonstrations.

\end{abstract}

\section{Introduction}

On-policy \acf{rl} with policy gradient methods underpins advances in areas ranging from language-model post-training \citep{ouyang2022training} to fine-tuning vision-language-action models \citep{lu2025vla} for robot control. Yet in many real-world settings, target-domain data are expensive
to collect and therefore scarce, e.g., a physical robot may collect only a handful of rollouts per policy update. In such data-scarce regimes, standard on-policy methods can break down because their policy gradient estimates become too noisy to support stable learning.

We study how to enable stable on-policy \ac{rl} in data-scarce regimes, ultimately using as few as 4 real-robot episodes per update, or target-domain data batches up to $512\times$ smaller than in standard GPU-simulation \ac{rl} pipelines~\citep{rudin2022learning}. The key is access to \emph{multi-fidelity} data: we assume that the target environment (a physical system or high-accuracy simulator)
provides limited, expensive \acf{hf} samples, whereas a simplified simulator provides abundant, inexpensive, but biased \acf{lf} samples. Exploiting \ac{lf} data requires care, however, and can easily %
bias policy updates and degrade performance in the target \ac{hf} environment. 

Existing approaches use \ac{lf} data in different ways. Off-dynamics \ac{rl} methods~\citep{eysenbach2021offdynamics,lyu2024cross} learn to correct \ac{lf} rewards for the dynamics mismatch, and sim-and-real co-training~\citep{wei2025empirical} mixes \ac{lf} and \ac{hf} data. However, \emph{reliability} is a major challenge. Most of these methods still directly optimize biased objectives based on \ac{lf} data, so their performance in the target \ac{hf} environment is largely tied to how well policies learned from \ac{lf} data transfer.

The recent \acf{mfpg} framework~\citep{liu2026a} offers a principled alternative. %
Here, \ac{lf} samples are used to construct a control variate~\citep{nelson1987control}---a correlated auxiliary random variable with a known mean---for the \ac{hf} policy gradient estimator. The \ac{mfpg} estimator %
remains unbiased with respect to the \ac{hf} objective, and the policy gradient surrogate variance shrinks as the cross-fidelity data correlation grows. \ac{mfpg} is therefore robust to \ac{lf} data biases.

However, the \ac{mfpg} construction of~\citet{liu2026a} is limited to the classic REINFORCE algorithm~\citep{williams1992simple} and small-scale, single-environment simulation tasks. How \ac{mfpg} interacts with modern actor-critic learning and GPU-parallel simulation~\citep{mittal2025isaaclab,tao2024maniskill3}, and how it translates to real-robot learning, remain unresolved. 
An earlier preprint~\citep{liu2025multi_v1}
directly substituted \acf{ppo} for REINFORCE. The resulting
method did not consistently improve over the REINFORCE
instantiation and sometimes even underperformed single-fidelity
\ac{ppo}. Hence, an effective extension requires more than algorithm substitution.

We address this gap with two contributions: (i) \emph{\ac{mfpg}-\ac{ppo}, a multi-fidelity \ac{ppo} algorithm.} Through analysis and experiments, we show how naive \ac{ppo} instantiations of \ac{mfpg} can lose cross-fidelity correlation or inflate variance by orders of magnitude. We address these failures by redesigning the sampling, advantage estimation, and control variate construction to preserve cross-fidelity correlation, and by monitoring estimator uncertainty to prevent variance inflation. Preserving correlation comes at the price of potential bias, a tradeoff we characterize analytically. (ii) \emph{A budget-aware \ac{mfpg}-\ac{ppo}.} When \ac{hf} and \ac{lf} samples incur different costs, we allocate a fixed per-iteration sampling budget across fidelities, guided by the variance-minimizing allocation framework of \citet{peherstorfer2016optimal}.

\begin{figure}[tb]
    \centering
    \includegraphics[width=0.86\linewidth]{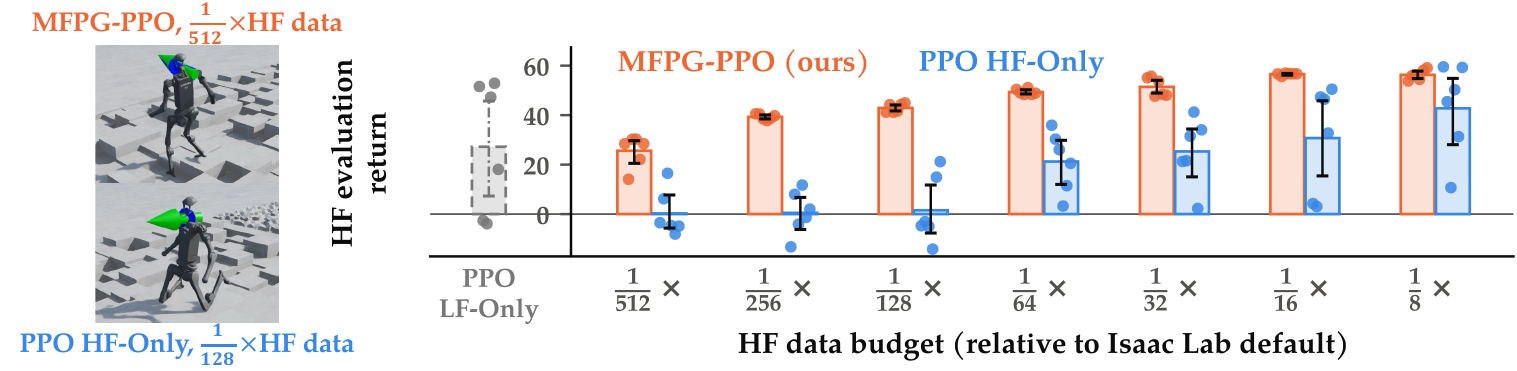}
    \caption{
\textbf{In a simulated humanoid robot locomotion task on rough terrain, \ac{mfpg}-\ac{ppo} (orange) \emph{scales monotonically} with the \ac{hf} data budget and, where \ac{ppo} trained on \ac{hf} data alone (blue) collapses, matches the mean return of \ac{ppo} trained with $16\times$ more \ac{hf} data.} \ac{hf} data budgets are set by the number of parallel \ac{hf} training environments and expressed as fractions of the Isaac Lab default (4096 environments). At each \ac{hf} budget, \ac{mfpg}-\ac{ppo} uses the same amount of \ac{hf} data as \ac{ppo} \ac{hf}-Only, plus abundant \ac{lf} data from a simplified simulator. \ac{ppo} \ac{lf}-Only (gray) trains on abundant \ac{lf} data alone and is evaluated in the \ac{hf} environment. Bars show mean \ac{hf} evaluation returns of final policies across
6 seeds, with $95\%$ seed-bootstrapped confidence intervals.
Dots denote individual seeds.
The snapshots show the \ac{mfpg}-\ac{ppo} policy walking across the terrain, whereas the \ac{ppo} \ac{hf}-Only policy remains on its knees despite using $4\times$ more \ac{hf} training data.
}
    \label{fig:scaling-v3}
\end{figure}

We test \ac{mfpg}-\ac{ppo} in GPU-parallel simulation~\citep{mittal2025isaaclab,tao2024maniskill3} and on a physical robot arm. Across simulation tasks and \ac{hf} data budgets, we compare with 5 baselines that use \ac{lf} data in different ways and with \ac{ppo} and \ac{sac}~\citep{haarnoja2018soft} trained on \ac{hf} data alone. \ac{mfpg}-\ac{ppo} is the only method in our comparison that improves over \ac{ppo} trained on \ac{hf} data alone in nearly all settings, is never distinguishably worse, and scales monotonically with more \ac{hf} data. Most baselines that use \ac{lf} data perform well only where direct \ac{lf}-to-\ac{hf} transfer succeeds. On the hardest task at the smallest \ac{hf} budgets, \ac{mfpg}-\ac{ppo} consistently matches the mean return of \ac{ppo} trained with $16\times$ more \ac{hf} data, cf. \cref{fig:scaling-v3}. Finally, \ac{mfpg}-\ac{ppo} enables stable \ac{rl} on a Franka arm with only 4 episodes per policy update and no human demonstrations.

\section{Related Work}

\ac{lf} simulation can support policy learning for an \ac{hf} target domain through system identification~\citep{ljung1998system,ramos2019bayessim}, zero-shot transfer with domain randomization~\citep{tobin2017domain,peng2018sim}, adapting simulation-trained priors with scarce \ac{hf} data~\citep{smith2022legged,levy2026simulation}, and combining multi-fidelity data within policy updates. We focus on the last category. A complementary line of multi-fidelity \ac{rl} research~\citep{cutler2015real,marco2017virtual,ryou2026multi,bhola2023multi,sifaou2025multi} studies which fidelity to query at each training stage rather than how to combine fidelities within a policy update.

Three mechanisms for combining multi-fidelity data are particularly relevant. \emph{Pooling} multi-fidelity data is a simple and effective recipe for sim-and-real co-training in imitation learning~\citep{maddukuri2025sim,wei2025empirical,lei2026mechanistic} and in \ac{rl}~\citep{di2021sim}. Off-dynamics \ac{rl} \emph{corrects} for estimated dynamics mismatch in \ac{lf} data, e.g., through reward penalties from domain classifiers~\citep{eysenbach2021offdynamics} or representation mismatch~\citep{lyu2024cross}, filtering by value discrepancy~\citep{xu2023cross}, support extension~\citep{van2024policy}, or imitation-based transfer~\citep{guo2024off}, with extensions to offline and hybrid offline-online settings~\citep{liu2022dara,niu2022trust,niu2025h2o+,lyu2025crossdomain,guo2026mobody}.

Pooling and correction optimize biased objectives based on \ac{lf} data, whereas multi-fidelity \emph{control variates} use \ac{lf} data to reduce variance in \ac{hf} estimates. Control variates are a classical tool in single-fidelity \ac{rl}~\citep{greensmith2004variance,gu2017qprop,cheng2020trajectory}. Recently, \citet{khairy2024multi} develop multi-fidelity control variates for state-action value estimation in tabular \acp{mdp}, while \citet{liu2026a} propose a policy gradient framework based on REINFORCE with function approximation. Building on these advances, we develop multi-fidelity control variates for actor-critic learning in GPU-parallel simulation and on a real robot.

\section{Preliminaries}
\label{sec:preliminaries}

\textbf{Multi-fidelity \acp{mdp}.} We consider a \hfc{\emph{\acf{hf}}} and a \lfc{\emph{\acf{lf}}} environment, modeled as infinite-horizon discounted \acp{mdp} $\mdp^f := (\stateSpace, \actionSpace, P^f, r^f, \initialStateDist, \gamma)$ for $f \in \{\high,\low\}$. Both share the state and action spaces $\stateSpace$ and $\actionSpace$, initial-state distribution $\initialStateDist$, and the discount factor $\gamma \in [0,1)$, but may differ in their next-state distributions
$P^f(\cdot \given \state, \action)$ and reward functions
$r^f\colon \stateSpace \times \actionSpace \to \R$,
where $\state\in\stateSpace$ denotes the current state and
$\action\in\actionSpace$ the action. 
We write $\trajectory \sim (\mdp^f, \policy_\policyparams)$ for a trajectory $\trajectory := (\state_0, \action_0, \state_1, \action_1, \dots)$ generated by a stochastic policy $\policy_\policyparams(\action \given \state)$ with parameters $\policyparams \in \R^{\policyParamsDim}$ in $\mdp^f$, and seek the policy that maximizes the expected discounted \ac{hf} return $J^\high(\policyparams) := \E_{\trajectory \sim (\mdp^\high, \policy_\policyparams)}\!\big[\textstyle\sum_{t=0}^{\infty} \gamma^t\, r^\high(\state_t, \action_t)\big]$. We assume the \ac{lf} simulator can be reset to recorded \ac{hf} states for sampling correlated multi-fidelity trajectories, cf. \cref{sec:mfpg-ppo}.

\textbf{Policy gradient surrogates.} Policy gradient methods estimate $\nabla_\policyparams J^\high(\policyparams)$ from trajectories $\trajectory \sim (\mdp^\high, \policy_\policyparams)$. We form estimates by averaging $\nabla_\policyparams \RV(\policyparams)$, where $\RV$ is a scalar surrogate statistic defined per trajectory or transition, depending on the algorithm. REINFORCE~\citep{williams1992simple} uses the \emph{trajectory-level} statistic  $\RV_\trajectory (\policyparams) := \sum_{t=0}^{H-1} \gamma^{t}\, \log \policy_\policyparams(\action_t \given \state_t)\, \big(\hat{G}_t - b(\state_t)\big)$, where $H$ is the episode length, $\hat{G}_t := \sum_{l=t}^{H-1} \gamma^{l-t}\, r^\high(\state_l, \action_l)$ the Monte Carlo return, and $b$ a baseline. During differentiation, $\hat{G}_t$ and $b$ are fixed. Episodic REINFORCE requires collecting complete episodes.

\Acf{ppo}~\citep{schulman2017proximal} operates on \emph{transitions} by replacing returns~$\hat{G}_t$ with advantage estimates
$\hat{A}_t$ commonly computed via \ac{gae}~\citep{schulman2015high}. Using a learned value function (critic) $V$, \ac{gae} sums temporal-difference residuals over a $T$-step rollout within an episode as $\hat{A}_t := \sum_{l=0}^{T-t-1}(\gamma\lambda)^l\delta_{t+l}$, where $\delta_t := r^\high(\state_t,\action_t)+\gamma V(\state_{t+1})-V(\state_t)$ and $\lambda\in[0,1]$ sets the bias-variance tradeoff. $T$ sets the \ac{gae} horizon. Bootstrapping with state values at rollout boundaries allows updates from parallel, short \emph{rollout segments}. \ac{ppo} reuses batches across epochs of minibatch updates on the surrogate $\RV_t(\policyparams) := \min\big(\omega_t \hat{A}_t,\, \operatorname{clip}(\omega_t, 1-\epsilon, 1+\epsilon)\, \hat{A}_t\big)$, where $\omega_t := \policy_\policyparams(\action_t \given \state_t)/\policy_{\policyparams_{\text{old}}}(\action_t \given \state_t)$ is the importance ratio relative to the data-collecting policy and $\epsilon$ the clipping range. These features make \ac{ppo} standard in GPU-based \ac{rl}~\citep{rudin2022learning} but depart from the \ac{mfpg}-REINFORCE setting.

\textbf{Multi-fidelity policy gradients.}\label{sec:tmlr-mfpg} Following~\citet{gu2017qprop} and \citet{liu2026a}, we target the variance of the scalar surrogate statistic $\RV(\policyparams)$ before differentiation. \citet{liu2026a} form a control-variate estimator~\citep{nelson1987control} on REINFORCE's trajectory-level statistic $\RV_\trajectory$ over $N^\high$ \ac{hf} trajectories:
$\nabla_\policyparams J^\high(\policyparams) \approx
(N^\high)^{-1}\sum_{i=1}^{N^\high}
\nabla_\policyparams \big[\hfc{\RV^\high_i(\policyparams)}
+ c\,\big(\lfc{\RV^\low_i(\policyparams)}
- \lfc{\hat{\mu}^\low(\policyparams)}\big)\big]$, where $c \in \R$ is the control variate coefficient and $c\,(\RV^\low_i - \hat{\mu}^\low)$ is a corrector for $\RV^\high_i$. Each $\RV^\low_i$ is computed on the \ac{lf} trajectory \emph{coupled} to its \ac{hf} counterpart: the \ac{lf} simulator is reset to the \ac{hf} initial state and the two rollouts share the same action-noise samples via a reparameterization trick~\citep{kingma2013auto} adapted to \ac{rl}. The mean $\hat{\mu}^\low(\policyparams) := (N^\low)^{-1}\sum_{j=1}^{N^\low} \RV_{\tilde{\trajectory}^\low_j}(\policyparams)$ is estimated from $N^\low \gg N^\high$ additional, independently sampled \ac{lf} trajectories $\tilde{\trajectory}^\low_j$. Intuitively, $\RV^\low$ is an \ac{lf} ``twin'' of the \ac{hf} statistic.
Formally, for $Z(c) := \RV^\high + c\,(\RV^\low - \mu^\low)$ with $\mu^\low := \E[\RV^\low]$, \citet{liu2026a} establish (i) \emph{unbiasedness}, $\E[Z(c)] = \E[\RV^\high]$ for fixed $c$, since the \ac{lf} corrector has zero mean, and (ii) \emph{variance reduction}, with the optimal $c^* = -\operatorname{Cov}(\RV^\high, \RV^\low)/\operatorname{Var}(\RV^\low)$ yielding $\operatorname{Var}\big(Z(c^*)\big) = \big(1-\rho^2\big)\operatorname{Var}(\RV^\high)$, where $\rho$ is the correlation between coupled $\RV^\high$ and $\RV^\low$. Variance reduction grows with $|\rho|$, and $c^*$ is estimated from data across iterations.

\section{Approach}\label{sec:approach}\label{sec:challenges}

We develop \ac{mfpg}-\ac{ppo} by addressing two challenges that arise when extending the control variate construction of \citet{liu2026a} to \ac{ppo}, then introduce a budget-aware \ac{mfpg}-\ac{ppo}.

\textbf{Challenge 1: What to pair as $(\hfc{\RV^\high}, \lfc{\RV^\low})$, and how to keep them correlated.} \citet{liu2026a} pair the complete-episode REINFORCE statistic $\RV_\trajectory$, which is unbiased but has two limitations. First, \ac{ppo} collects short rollouts from desynchronized environments, which need not contain any complete episode. Second, since the multi-fidelity environments have different dynamics, coupled trajectories can diverge over long episodes despite shared initial states and action noise, weakening or destabilizing their correlation $\rho$ on which the variance reduction $(1-\rho^2)$ relies. One trajectory may even terminate while the other continues.

\textbf{Challenge 2: How much to trust the estimates.} The correlation $\rho$ and the coefficient $c^*$ are unknown and estimated from scarce \ac{hf} and coupled \ac{lf} data.
However, a poorly estimated coefficient can make the estimator \emph{noisier} than using \ac{hf} data alone. Moreover, \ac{ppo} reuses each batch across policy updates, so coefficient estimates must account for changes in the surrogate statistics.
The control variate should therefore be applied cautiously only when the evidence supports its use.

\subsection{\ac{mfpg}-\ac{ppo}}\label{sec:mfpg-ppo}

\textbf{Workflow.} Each iteration of \ac{mfpg}-\ac{ppo} follows four steps, cf. \cref{alg:mfpg} in \cref{sec:complete-algo}. First, \emph{multi-fidelity data collection} gathers \ac{hf}, coupled \ac{lf}, and uncoupled \ac{lf} batches of transitions. Within each batch, $k$ indexes independent rollouts, e.g., from parallel environments, and $t$ their steps. Second, \emph{constructing surrogate statistics} computes the \ac{ppo}  statistics defined in \cref{sec:preliminaries} for all three batches. Let $\mathcal{P}$ index the paired statistics $(\hfc{\RV^\high_{k,t}}, \lfc{\RV^\low_{k,t}})$ and $\mathcal{U}$ the uncoupled \ac{lf} statistics. Third, \emph{coefficient fitting} estimates and smooths the coefficient $c$ from the pairs in $\mathcal{P}$ for the estimator
\begin{equation}\label{eq:mfpg-ppo}
    L^{\mathrm{MF}}(\policyparams) := \frac{1}{|\mathcal{P}|}
    \sum_{(k,t) \in \mathcal{P}} \Big[ \hfc{\RV^\high_{k,t}(\policyparams)}
    + c\, \big( \lfc{\RV^\low_{k,t}(\policyparams)} -
    \lfc{\hat{\mu}^\low(\policyparams)} \big) \Big],
    \quad
    \hat{\mu}^\low(\policyparams) := \frac{1}{|\mathcal{U}|}
    \sum_{(k,t) \in \mathcal{U}} \RV^\low_{k,t}(\policyparams).
\end{equation}
The policy gradient estimate is $\nabla_\policyparams L^{\mathrm{MF}}$, with gradients flowing through all three terms, including $\hat{\mu}^\low$, and the advantages and $c$ held fixed. Finally, \emph{monitoring} decides whether to apply the control variate or to fall back to a \ac{ppo} update on \ac{hf} data alone by setting $c = 0$. The first two steps address Challenge~1 and the last two Challenge~2. %

\textbf{Multi-fidelity data collection.} To preserve correlation between the paired statistics $(\hfc{\RV^\high_{k,t}}, \lfc{\RV^\low_{k,t}})$, we limit divergence between \ac{hf} and coupled \ac{lf} rollouts by (i) sharing controllable sources of randomness and (ii) periodically re-synchronizing the coupled \ac{lf} rollouts to recorded \ac{hf} states, cf. lines~\ref{line:collect} to~\ref{line:pairs} of \cref{alg:mfpg}. \Cref{fig:sampling-pipeline} illustrates the multi-fidelity data collection in two settings. 

When \ac{hf} and \ac{lf} environments are GPU-parallel simulations of different accuracy, we collect short $T$-step rollouts from $N^\high$ \ac{hf}, $N^\high$ coupled \ac{lf}, and $N^\low \gg N^\high$ uncoupled \ac{lf} environments. \Cref{fig:sampling-pipeline}(a) illustrates one representative rollout from each batch. We first collect the \ac{hf} rollouts, then reset each coupled \ac{lf} environment to its counterpart's rollout start state, including task variables such as terrain. We assume the policy is \emph{reparameterizable}~\citep{kingma2013auto} as $\action_t^f = g_\policyparams(\state_t^f, \varepsilon_t)$ with a deterministic map $g_\policyparams$ and noise $\varepsilon_t$ drawn from a fixed distribution independent of $\policyparams$. Coupled rollouts share the noise sample at corresponding steps, with the policy evaluated at each fidelity's own state. When one environment of a pair terminates while the other continues, the remaining steps of the continuing environment are uncoupled and excluded from $\mathcal{P}$ until the next re-synchronization, i.e., an environment reset or the start of the next iteration. The rollout length $T$ sets both \ac{ppo}'s rollout horizon and the re-synchronization interval. Uncoupled \ac{lf} environments draw their action noise independently and reset freely to $\state^\low_0 \sim \initialStateDist$.

\begin{figure}[tb]
    \centering
    \includegraphics[width=0.99\linewidth]{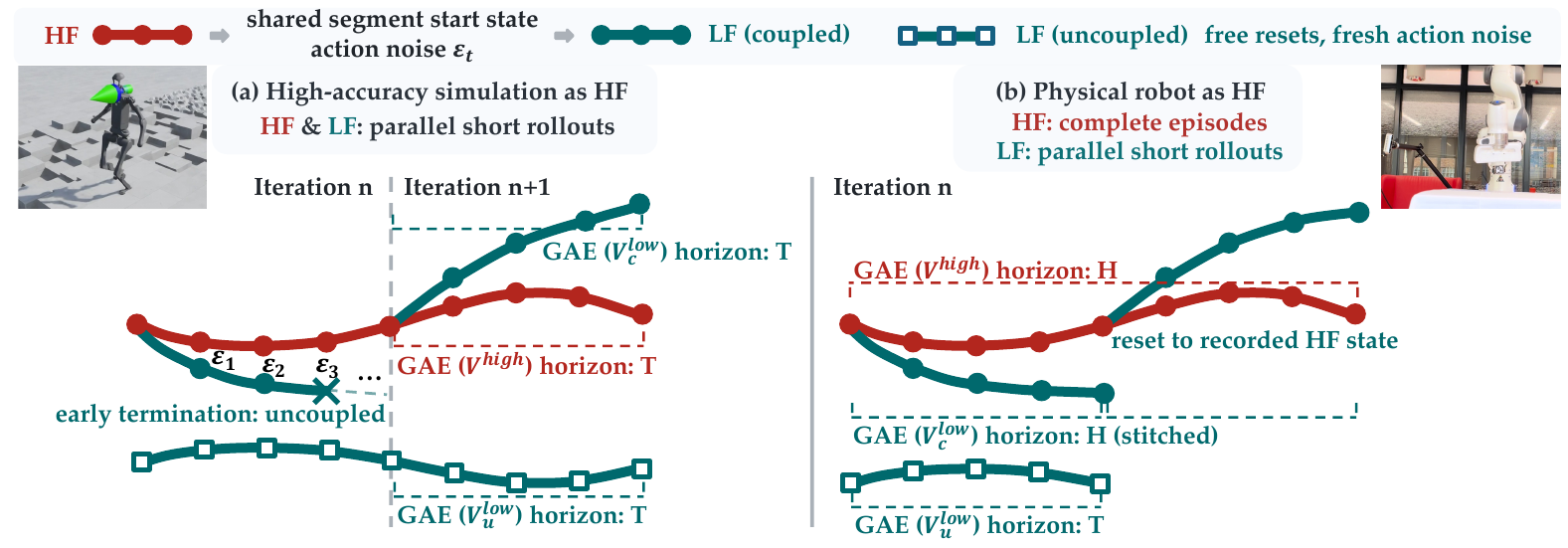}
    \caption{
\textbf{\ac{mfpg}-\ac{ppo} collects three batches of transitions.} Representative \ac{hf}, coupled \ac{lf}, and uncoupled \ac{lf} rollouts are shown with red circles, filled teal circles, and hollow teal squares. \textbf{(a)}~When \ac{hf} and \ac{lf} are simulations of different accuracy, each batch contains parallel $T$-step rollouts per iteration. \textbf{(b)}~When the \ac{hf} environment is a robot collecting $H$-step episodes ($H > T$), coupled \ac{lf} simulation rollouts restart from recorded \ac{hf} states every $T$ steps. In both cases, the \ac{lf} simulation runs only parallel $T$-step rollouts, while the coupled \ac{lf} \acs{gae} horizon matches the \ac{hf} horizon, $T$ in (a) and $H$ in (b). In (b), we stitch the coupled \ac{lf} rollouts of each episode for \acs{gae}.
}
    \label{fig:sampling-pipeline}
\end{figure}

When the \ac{hf} environment is a physical robot and the \ac{lf} environment is simulation, a robot instead collects $K$ complete $H$-step episodes sequentially ($H>T$). We divide each episode into $T$-step segments and launch a coupled \ac{lf} rollout from each segment's \emph{measured} initial state, cf. \cref{fig:sampling-pipeline}(b). These rollouts run in parallel after \ac{hf} collection and re-synchronize frequently to \ac{hf} states.

\textbf{Constructing paired statistics $(\hfc{\RV^\high_{k,t}}, \lfc{\RV^\low_{k,t}})$.} We construct \ac{ppo} statistics by computing \ac{gae} with critics $V^\high$, $V^\low_{\mathrm{c}}$, and $V^\low_{\mathrm{u}}$ for the \ac{hf}, coupled \ac{lf}, and uncoupled \ac{lf} batches, respectively. The separate \ac{lf} critics accommodate the different state distributions of the coupled and uncoupled \ac{lf} rollouts. We match the \ac{gae} horizons of the coupled \ac{lf} rollouts to the \ac{hf} rollouts to preserve cross-fidelity correlation, cf. lines~\ref{line:adv} and~\ref{line:stat} of \cref{alg:mfpg} and \cref{fig:sampling-pipeline}.

In simulation, all three batches use a $T$-step \ac{gae} horizon matching their rollout length, cf. \cref{fig:sampling-pipeline}(a). On a real robot, we match the $H$-step \ac{hf} \ac{gae} horizon by stitching the coupled \ac{lf} rollout segments of the same episode, cf. \cref{fig:sampling-pipeline}(b). Uncoupled \ac{lf} rollouts use an ordinary $T$-step \ac{gae} horizon in both settings. The critics, in contrast, are trained only on transitions their own environment produced, with targets that never span a re-synchronized state, so that each remains a valid value estimate of its own environment. Since the coupled and uncoupled \ac{lf} statistics are computed along different trajectory distributions, the \ac{lf} correction in \cref{eq:mfpg-ppo} can introduce bias, a cost of preserving correlation. We analyze the tradeoff for this design in \cref{sec:free-baseline} and show that re-synchronizing the uncoupled \ac{lf} rollouts as well would remove the bias but leave part of the variance irreducible.

\textbf{Fitting the coefficient $c$.}\label{sec:fitting} We fit $c$ by estimating the optimal value $c^*$ of \cref{sec:preliminaries} from the pairs $(\hfc{\RV^\high_{k,t}}, \lfc{\RV^\low_{k,t}})$, $(k, t) \in \mathcal{P}$, and smooth the estimates with an \ac{ema}. We re-fit $c$ as the policy is updated across minibatches, cf. lines~\ref{line:fit} and~\ref{line:refit} of \cref{alg:mfpg}.

Since the pairs are scarce (Challenge~2), one option, taken by the released implementation of \citet{liu2026a}, treats all pairs $(k, t) \in \mathcal{P}$ as independent samples and fits \emph{per step}, $\hat{c}_{\mathrm{step}} := -\widehat{\operatorname{Cov}}(\RV^\high_{k,t}, \RV^\low_{k,t})/\widehat{\operatorname{Var}}(\RV^\low_{k,t})$. The per-step fit multiplies the sample size and stabilizes the estimate, but it can miss cross-step correlation relevant to variance reduction. On our real-robot manipulation task, cf. \cref{fig:franka-main}(c), key events such as grasping occur asynchronously in the two fidelities, and the per-step correlation is weak ($\hat{\rho} \approx 0.17$). Fitting on the \emph{episode sums} $B^f_k := \sum_t \RV^f_{k,t}$, i.e., $\hat{c}_{\mathrm{ep}} := -\widehat{\operatorname{Cov}}(B^\high_k, B^\low_k)/\widehat{\operatorname{Var}}(B^\low_k)$, instead retains the cross-step correlation ($\hat{\rho} \approx 0.6$). In simulation, per-step correlations are substantially higher, cf. \cref{fig:event_timing} in \cref{app:more-results}. We therefore fit $c$ per step in simulation and on episode sums on the robot. The fitting unit trades off sample size against the cross-step correlation it captures, as formalized in \cref{lem:fit-granularity-vi}, with proof in \cref{app:proof-fit-granularity}.
\begin{lemma}[Per-step fitting omits cross-step covariance]\label{lem:fit-granularity-vi}
    For a fixed policy, consider paired statistics $\big\{ \big( \hfc{\RV^\high_{k,t}},\; \lfc{\RV^\low_{k,t}} \big) \big\}_{k=1,\,t=1}^{K,\;H}$ from $K$ i.i.d. episodes of $H$ steps, with the episode sums $B^\high_k$ and $B^\low_k$ defined above. In the moments below, we drop the episode index $k$. Assume the per-step means $\E[\RV^\high_t]$ and $\E[\RV^\low_t]$ are constant in $t$. Treating $\hat{\mu}^\low$ as fixed and under the regularity conditions of \cref{app:proof-fit-granularity}, the variance of the estimator in \cref{eq:mfpg-ppo} is minimized by $c^* = -\operatorname{Cov}(B^\high, B^\low)/\operatorname{Var}(B^\low)$, where $\operatorname{Cov}(B^\high, B^\low) = \textcolor{violet}{\sum_{t=1}^{H}\sum_{t'=1}^{H}} \operatorname{Cov}(\RV^\high_t, \RV^\low_{t'})$. The per-step fit estimates $c_{\mathrm{step}} = -\textcolor{violet}{\sum_{t}}\operatorname{Cov}(\RV^\high_t, \RV^\low_t)\,/\, \sum_{t}\operatorname{Var}(\RV^\low_t)$, retaining only the same-step terms. Hence, $c_{\mathrm{step}} = c^*$ when cross-step covariances vanish, but not in general.
\end{lemma}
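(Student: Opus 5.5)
The plan is to reduce the estimator in \cref{eq:mfpg-ppo}, for fixed $\hat{\mu}^\low$ and fixed $c$, to a sample mean of i.i.d. episode-level quantities, and then apply the scalar control-variate argument recalled in \cref{sec:preliminaries}. With $|\mathcal{P}| = KH$ and all $KH$ pairs present, we have
\[
L^{\mathrm{MF}} = \frac{1}{KH}\sum_{k=1}^{K}\Big[B^\high_k + c\,\big(B^\low_k - H\hat{\mu}^\low\big)\Big].
\]
The episodes are i.i.d. and $\hat{\mu}^\low$ is treated as a constant, so
\[
\operatorname{Var}(L^{\mathrm{MF}}) = \frac{1}{K H^2}\operatorname{Var}\big(B^\high + c\,B^\low\big).
\]
The additive constant $-cH\hat{\mu}^\low$ drops out of the variance. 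The constant-mean assumption is used only to make this centering by a single scalar mean legitimate, so that it does not enter the variance. I will list the regularity conditions as: finite second moments, $\operatorname{Var}(B^\low) > 0$, and a complete set of $H$ coupled pairs per episode. That last condition excludes early termination, so that $\mathcal{P}$ has a fixed size.

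Next I expand the variance as a quadratic in $c$:
\[
\operatorname{Var}(B^\high) + 2c\operatorname{Cov}(B^\high,B^\low) + c^2\operatorname{Var}(B^\low).
\]
This quadratic is strictly convex because $\operatorname{Var}(B^\low)>0$. Setting its derivative to zero gives the unique minimizer $c^* = -\operatorname{Cov}(B^\high,B^\low)/\operatorname{Var}(B^\low)$. Bilinearity of covariance, with $B^f = \sum_t \RV^f_t$, then gives the stated double sum $\sum_{t}\sum_{t'}\operatorname{Cov}(\RV^\high_t,\RV^\low_{t'})$. Similarly,
\[
\operatorname{Var}(B^\low) = \sum_t\operatorname{Var}(\RV^\low_t) + \sum_{t\neq t'}\operatorname{Cov}(\RV^\low_t,\RV^\low_{t'}).
\]

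For the per-step fit, I need its population target. The pooled sample covariance over all $(k,t)$, with the mean taken over steps, converges as $K\to\infty$ to the average over $t$ of the per-step moments. This holds precisely because the per-step means are constant in $t$, so pooling introduces no between-step mean term. The pooled target is therefore
\[
\tfrac{1}{H}\sum_t\operatorname{Cov}(\RV^\high_t,\RV^\low_t) \quad\text{over}\quad \tfrac{1}{H}\sum_t\operatorname{Var}(\RV^\low_t).
\]
The factors $1/H$ cancel, which gives $c_{\mathrm{step}}$. This identification of the limit is the step needing the most care: one must check that the pooled moment, which mixes different $t$, matches the sum of per-step moments. The check is to write the pooled mean as $\bar{\mu}$ and note that $\E[\RV_t] = \bar{\mu}$ for all $t$.

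Finally, comparing the two expressions shows that they coincide when all cross-step covariances vanish. This requires both the cross-fidelity terms $\operatorname{Cov}(\RV^\high_t,\RV^\low_{t'})$ and the within-LF terms $\operatorname{Cov}(\RV^\low_t,\RV^\low_{t'})$ for $t\neq t'$ to be zero. In that case the numerators and denominators agree term by term. To show that equality fails in general, I would give a two-step example. Take $H=2$ with unit variances, zero same-step cross covariance, and $\operatorname{Cov}(\RV^\high_1,\RV^\low_2)=\tfrac12$. Then $c_{\mathrm{step}}=0$ while $c^*\neq 0$. This mirrors the asynchronous grasping scenario the lemma is meant to explain.
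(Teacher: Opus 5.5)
Your proposal is correct and follows essentially the same route as the paper's proof: regroup $L^{\mathrm{MF}}$ by episode, minimize the strictly convex quadratic $\operatorname{Var}(B^\high + c\,B^\low)$, expand by bilinearity, and identify the limit of the pooled per-step fit via the strong law over i.i.d.\ episodes, with constant per-step means removing the mean-variation term. Two small fixes are needed. First, the variance step does not use the constant-mean assumption at all, since a fixed $\hat{\mu}^\low$ drops out regardless; the assumption matters only for the per-step limit. Second, your counterexample must pin down the unspecified moments, e.g., zero means and $\operatorname{Cov}(\RV^\high_2,\RV^\low_1)=\operatorname{Cov}(\RV^\low_1,\RV^\low_2)=0$, because $\operatorname{Cov}(\RV^\high_2,\RV^\low_1)=-\tfrac12$ would give $c^*=0$ --- the paper's swap $\RV^\low_1:=\RV^\high_2$, $\RV^\low_2:=\RV^\high_1$ avoids this by forcing $B^\low=B^\high$ and hence $c^*=-1$.
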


\textbf{Monitoring the control variate.} Since the coefficient $c$ is fitted on scarce data, we apply the control variate only when two checks pass, cf. lines~\ref{line:fit} and~\ref{line:monitor} of \cref{alg:mfpg}. Otherwise, the update falls back to \ac{hf}-only by setting $c = 0$. First, \emph{statistical evidence}. The correlation between the paired statistics must be distinguishable from zero at a chosen confidence level, which we set to $90\%$. In simulation, a bootstrap confidence interval~\citep{efron1992bootstrap} on the fitted $c$ must exclude zero. 
On the real robot, bootstrapping from scarce episode pairs can be unstable. We instead use a heuristic check on the estimated correlation $\hat{\rho}$ between episode sums $(B^\high_k, B^\low_k)$. Since the distribution of $\hat{\rho}$ is skewed, we average its Fisher-$z$ transform $\operatorname{atanh}(\hat{\rho})$~\citep{fisher1921probable}, which is approximately normal, over a sliding window of recent iterations and require the resulting normal confidence interval to exclude zero, cf. \cref{app:gating}. Second, \emph{empirical variance reduction}. The corrected statistic $\hfc{\RV^\high_{k,t}} + c\,\big(\lfc{\RV^\low_{k,t}} - \lfc{\hat{\mu}^\low}\big)$ must have lower sample variance than $\hfc{\RV^\high_{k,t}}$ alone over the pairs.

\subsection{Budget-Aware \ac{mfpg}-\ac{ppo}}\label{sec:budget-allocation}

The estimator in \cref{eq:mfpg-ppo} so far treats \ac{lf} data as effectively free with $N^\low \gg N^\high$. When \ac{lf} sampling is not free, we ask: \emph{given a fixed per-iteration budget and heterogeneous per-sample costs, how many samples should be drawn from each fidelity to minimize the variance of the control variate estimator?} Writing $\samplecosth, \samplecostl$ for the per-sample costs (per $T$-step rollout), $p$ for the budget, and $m^\high, m^\low$ for the \ac{hf} and \emph{total} coupled and uncoupled \ac{lf} sample counts, the question becomes
\begin{equation}\label{eq:budget-alloc}
    \min_{c,\;m^\high,\;m^\low}\; \Var\big[L^{\mathrm{MF}}_{\mathrm{all}}(\policyparams)\big]
    \quad \textrm{s.t.}\quad \samplecosth m^\high + \samplecostl m^\low = p,\quad m^\low \ge m^\high > 0,
\end{equation}
a two-fidelity case of the \ac{mfmc} allocation problem \citep{peherstorfer2016optimal}. Here, $L^{\mathrm{MF}}_{\mathrm{all}}$ is \cref{eq:mfpg-ppo} with the baseline mean $\lfc{\hat{\mu}^\low_{\mathrm{all}}}$ over \emph{all} coupled and uncoupled \ac{lf} samples, as in the \ac{mfmc} estimator. 
Let $\nu := m^\low/m^\high$ denote the ratio of total \ac{lf} to \ac{hf} sample counts. Under the idealized sampling model in \cref{app:budget-allocation}, the variance-minimizing allocation~\citep[Thm.~3.4]{peherstorfer2016optimal} satisfies $\nu^* = \max\big\{1,\, \sqrt{(\samplecosth/\samplecostl)\,\rho^2/(1-\rho^2)}\big\}$ and $m^{\high,*} = p/(\samplecosth+\samplecostl\,\nu^*)$. \Cref{app:budget-allocation} gives a derivation and details on flooring the real-valued solution to integers within the budget. For $\nu^*>1$, the coefficient remains the $c^{*}$ of \cref{sec:tmlr-mfpg}. At $\nu^*=1$, the \ac{lf} correction vanishes. The ratio $\nu^*$ grows as \ac{lf}
rollouts become cheaper or more strongly correlated with
their \ac{hf} counterparts.

\textbf{Online budget allocation.} As in multi-fidelity optimization under uncertainty~\citep{leo2014multifidelity,HYUN2026118714}, we adapt the allocation as the policy changes. We track $\rho$ with the smoothing scheme of \cref{sec:fitting}, re-solve \cref{eq:budget-alloc} every iteration \emph{before} data collection, and set $N^\high=m^{\high,*}$ and $N^\low=m^{\low,*}-m^{\high,*}$ after flooring to integers.  Coefficient fitting and monitoring carry over unchanged, and the policy objective becomes $L^{\mathrm{MF}}_{\mathrm{all}}$. Whenever $\nu^{*} = 1$, i.e., \ac{lf} data do not warrant more samples than \ac{hf} data, the iteration collects only coupled \ac{hf}-\ac{lf} data, which preserves our ability to estimate $\rho$, and the update falls back to \ac{hf}-only \ac{ppo}, cf. \cref{app:budget-allocation}.

\section{Experiments}

\textbf{Evaluation criteria.} A practitioner rarely knows in advance how informative the \ac{lf} data will be. We therefore evaluate two criteria. (i) \emph{Monotone scaling}, i.e., an approach's performance should improve monotonically as more \ac{hf} data become available, until it plateaus. (ii) \emph{Reliability}, i.e., compared to training with limited \ac{hf} data alone, an approach should perform better when the \ac{lf} data are informative and no worse when they are misleading.

\subsection{Experimental Setup}\label{sec:setup}

\textbf{Simulation setup and \ac{hf} data budgets.} We evaluate locomotion on rough and flat terrain with the Unitree H1 humanoid and the ANYmal-D quadruped in Isaac Lab~\citep{mittal2025isaaclab}. We sweep a \emph{sample-scarcity} axis by varying the number of parallel \ac{hf} environments $N^\high$ from 8 to 512, up to $512\times$ fewer than the Isaac Lab default of 4096, while fixing $N^\low=4096$ uncoupled \ac{lf} environments. The \ac{lf} simulation simplifies the robot collision model, coarsens the physics time step, reduces the number of physics solver iterations, and disables self-collisions. 
Since exploration from a random initialization is difficult with scarce \ac{hf} data, all on-policy methods that use \ac{hf} data begin from a shared policy briefly pre-trained with \ac{ppo} in the \ac{lf} environment. \ac{sac} starts from its own \ac{lf}-pretrained policy with matched \ac{lf} training return. Unless otherwise stated, we evaluate each final policy deterministically
over $2048$ \ac{hf} episodes and report mean returns with $95\%$
bootstrap confidence intervals across 6 seeds. \Cref{app:implementation-details} reports details.

\textbf{Real-robot setup.} We evaluate cube picking with a
physical Franka arm and ManiSkill
simulation~\citep{tao2024maniskill3} as the \ac{hf} and \ac{lf} environments, respectively. The task is harder in the \ac{hf} environment because the robot's side-mounted camera loses sight of the object during contact (around 69\% of the steps), and grasp detection relies on gripper width and may be delayed. By contrast, the simulator provides ground-truth object and grasp states.
Friction and controller stiffness also differ between fidelities. 
We run 3 random seeds per real-robot method, each for
$80$ iterations of $4$ episodes. \ac{mfpg}-\ac{ppo} collects extra coupled
\ac{lf} rollouts and $8192$ uncoupled $20$-step \ac{lf} rollouts.

\textbf{Baselines and shared tuning protocol.} We evaluate two \ac{mfpg}-\ac{ppo} variants. \emph{MFPG-PPO-FR} in \cref{sec:mfpg-ppo} uses a \emph{fixed} \ac{hf}-to-\ac{lf} sampling \emph{ratio}, and \emph{MFPG-PPO-BA} in \cref{sec:budget-allocation} performs \emph{budget} \emph{allocation} online. Baselines cover several uses of \ac{lf} data. \emph{\ac{ppo} \ac{hf}-Only} uses no \ac{lf} data. \emph{Co-training}~\citep{wei2025empirical} pools \ac{hf} and \ac{lf} data. \emph{DARC}~\citep{eysenbach2021offdynamics} and \emph{PAR}~\citep{lyu2024cross} correct \ac{lf} rewards. \emph{\ac{ppo} \ac{lf}-Only} uses no \ac{hf} data. \emph{\ac{mfpg}-REINFORCE}~\citep{liu2026a} uses control variates. 
To isolate the effect of \ac{lf} data use from the algorithm backbone, we adapt DARC and PAR to the \ac{ppo} backbone of \ac{mfpg}-\ac{ppo}, following the practice of \citet{liu2026a}, and Co-training to \ac{rl} on the same backbone. We additionally compare to an off-policy baseline \emph{\ac{sac} \ac{hf}-Only}~\citep{haarnoja2018soft}. Each baseline matches the data budget of \ac{mfpg}-\ac{ppo}-FR at each fidelity it consumes. \Cref{app:implementation-details} describes the baselines and the shared hyperparameter tuning protocol. \Cref{app:more-results} reports extended comparisons.

\subsection{How does \ac{mfpg}-\ac{ppo} scale with \ac{hf} data scarcity?}\label{sec:mfpg-scaling}

The bar chart in \cref{fig:scaling-v3} compares MFPG-PPO-FR with \ac{ppo} \ac{hf}-Only at matched \ac{hf} data budgets on H1 Rough, the hardest task in our suite. 
\ac{ppo} \ac{hf}-Only collapses to near-zero return at \ac{hf} budgets of $\frac{1}{128}\times$ and below, whereas MFPG-PPO-FR still attains a mean return around $40$ at $\frac{1}{256}\times$. 
At each of the 3 smallest \ac{hf} budgets,
MFPG-PPO-FR matches or exceeds the mean return of
\ac{ppo} \ac{hf}-Only trained with $16\times$ more \ac{hf} data while exhibiting substantially
lower cross-seed variability. 
\ac{ppo} \ac{lf}-Only transfers poorly on half of the seeds, and even the seeds that transfer are brittle, with roughly twice the cross-episode return variability of \ac{mfpg}-\ac{ppo}-FR.

\subsection{How do different uses of \ac{lf} data scale across tasks and \ac{hf} budgets?}\label{sec:baseline-comparison}

\Cref{fig:baseline-delta} reports returns relative to \ac{ppo} \ac{hf}-Only across tasks and \ac{hf} budgets from $\frac{1}{64}\times$ to $\frac{1}{8}\times$ the Isaac Lab default. \ac{lf}-to-\ac{hf} transfer is \emph{easy} on ANYmal-D, where \ac{ppo} \ac{lf}-Only matches the best methods, but \emph{hard} on H1, where transfer is poor and seed-sensitive. Baselines that exploit \ac{lf} data greedily follow this pattern: DARC and Co-training achieve among the largest gains on ANYmal-D but fall below or provide no distinguishable gain over \ac{ppo} \ac{hf}-Only on H1. Both \ac{mfpg}-REINFORCE and \ac{sac} \ac{hf}-Only generally outperform \ac{ppo} \ac{hf}-Only on H1 Rough but underperform in most settings on the other 3 tasks.

\begin{figure}[tb]
    \centering
    \includegraphics[width=0.96\linewidth]{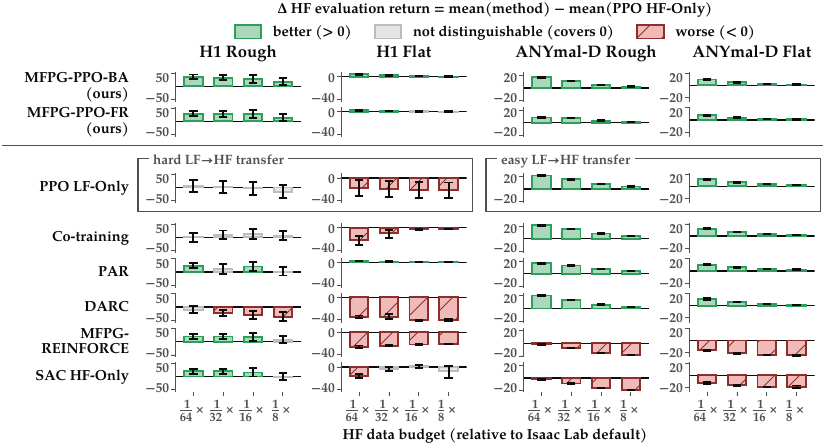}
    \caption{
\textbf{\ac{mfpg}-\ac{ppo} is the only method in our comparison that improves upon \ac{ppo} \ac{hf}-Only in nearly every setting, is never distinguishably worse, and scales monotonically with more \ac{hf} data.} Bars denote differences in mean \ac{hf} evaluation return relative to \ac{ppo} \ac{hf}-Only. Error bars denote
$95\%$ bootstrapped confidence intervals for the mean of the difference across 6 seeds. 
}
    \label{fig:baseline-delta}
\end{figure}

\ac{mfpg}-\ac{ppo} leverages \ac{lf} data reliably. MFPG-PPO-FR outperforms \ac{ppo} \ac{hf}-Only in
$14$ of the $16$ task-budget settings and is never
distinguishably worse. The remaining 2, on H1 Flat, are not distinguishable because \ac{ppo} \ac{hf}-Only already performs among the best. MFPG-PPO-BA, assuming a cost ratio $\samplecosth/\samplecostl = 10$, outperforms \ac{ppo} \ac{hf}-Only in all $16$ settings. Its \ac{hf} budget labels are nominal, since it allocates a sampling budget online. The gains of \ac{mfpg}-\ac{ppo} are largest on the hardest task, H1 Rough, at the smallest \ac{hf} budgets. This reliability comes with a tradeoff. In easy \ac{lf}-to-\ac{hf} settings, the greedier methods gain more from the \ac{lf} data.

PAR also improves in $14$ settings and is never
distinguishably worse than \ac{ppo} \ac{hf}-Only, but it scales less consistently with \ac{hf} data. On H1 Rough, its mean return decreases at 2 of the 3 budget increases. MFPG-PPO-FR's mean return instead rises with more \ac{hf} data in every task, plateauing only at the largest budget on H1 Rough, as shown in \cref{fig:baseline-absolute-perform} in \cref{app:more-results}.

\subsection{Can \ac{mfpg}-\ac{ppo} learn stably on a real robot?}\label{sec:franka}

\begin{figure}[tb]
    \centering
    \includegraphics[width=0.85\linewidth]{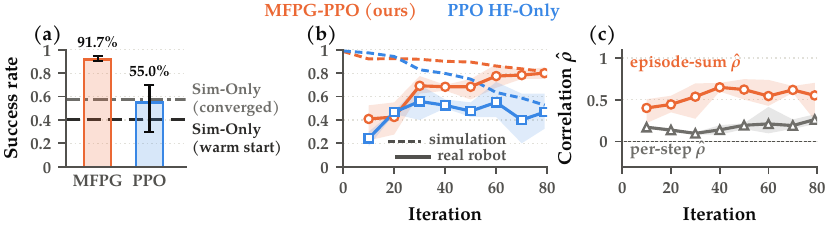}
    \caption{
\textbf{Neither limited real-world data nor abundant simulation data alone yield a good real-robot policy, whereas \ac{mfpg}-\ac{ppo} succeeds by combining them.} Bars and curves denote means across 3 seeds. Whiskers and shaded bands show their ranges. 
\textbf{(a)}~Final success rate of the deterministic policy over $60$ real-robot episodes ($20$ per seed).
Horizontal lines denote
the shared warm-start checkpoint performance and that of a simulation-trained policy well past convergence ($100\%$ simulation success),
each evaluated over $40$ real-robot episodes. \textbf{(b)}~Training success rate on the robot
(solid lines denote running averages over 40 episodes per seed)
and in uncoupled \ac{lf} simulation (dashed lines). \textbf{(c)}~Pearson correlation between the coupled \ac{lf} and \ac{hf} paired statistics, computed on episode sums (4 pairs) and per step (240 pairs), averaged over the past 10 iterations. 
        }
    \label{fig:franka-main}
\end{figure}

We test \ac{mfpg}-\ac{ppo} on the Franka cube-picking task in \cref{sec:setup}. Simulation data are near-free relative to real-robot data, so we use \ac{mfpg}-\ac{ppo}-FR. We compare it with \ac{ppo} \ac{hf}-Only, initializing both from the same simulation-trained policy, which has near-perfect simulation success but retains exploration noise. \Cref{fig:franka-main}(a) reports the final evaluation results. Starting from $40.0\%$ real-robot success, MFPG-PPO-FR reaches $91.7\%$, compared with $55.0\%$ for \ac{ppo} \ac{hf}-Only, and every MFPG-PPO-FR seed ($90$--$95\%$) exceeds every \ac{ppo} \ac{hf}-Only seed ($30$--$70\%$). \Cref{fig:franka-main}(b) shows a consistent pattern during training: the real-robot success rate of MFPG-PPO-FR trends above \ac{ppo} \ac{hf}-Only after 30 iterations. Meanwhile, the same policies' success rates in simulation, cf. the dashed lines in \cref{fig:franka-main}(b), decline toward their real-robot performance, consistent with adaptation to the hardware.

\Cref{fig:franka-main}(c) shows the fitting-unit tradeoff for the coefficient $c$ in \cref{lem:fit-granularity-vi}. The same policy tends to grasp earlier in simulation than in the real world, cf. \cref{fig:event_timing} in \cref{app:more-results}. On the same coupled samples, the episode-sum correlation is higher but fluctuates more than the per-step correlation.

\subsection{Ablations: naive control variate construction amplifies variance}\label{sec:ablation}

\begin{figure}[tb]
    \centering
    \includegraphics[width=0.85\linewidth]{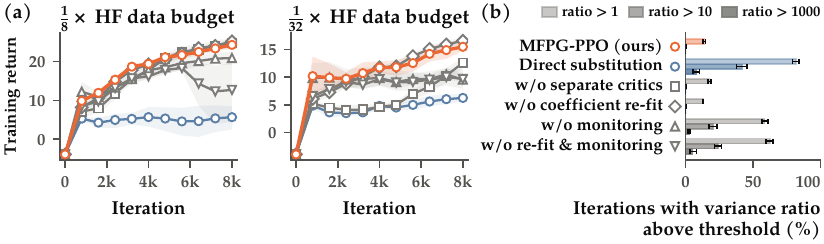}
    \caption{
Curves and bars denote means over 6 seeds. Bands and whiskers denote seed-bootstrapped $95\%$ confidence intervals for the means. \textbf{(a)}~\ac{hf} training return on H1 Rough. \textbf{(b)}~Fraction of training iterations whose variance ratio $\Var[\hfc{\RV^\high_{k,t}}
+ c\,(\lfc{\RV^\low_{k,t}}-\lfc{\hat{\mu}^\low})]
\,/\,\Var[\hfc{\RV^\high_{k,t}}]$, averaged over minibatches \emph{before} the monitoring check, exceeds $1$, $10$, and $1000$ at the $\frac{1}{8}\times$ \ac{hf} budget. 
    }
    \label{fig:ablation}
\end{figure}

We compare MFPG-PPO-FR with component ablations and Direct substitution, which substitutes \ac{ppo} for REINFORCE in \ac{mfpg}, mirroring the preprint~\citep{liu2025multi_v1}. Direct substitution lacks the estimator-side machinery of \cref{sec:mfpg-ppo} (separate critics, monitoring, per-minibatch coefficient re-fitting) but shares our data-collection scheme. In \cref{fig:ablation}(a), Direct substitution yields the lowest training return at both \ac{hf} budgets on H1 Rough. Per-minibatch coefficient re-fitting and monitoring act \emph{jointly}. Removing either alone at most lowers the return, while removing both causes learning to collapse at the $\frac{1}{8}\times$ budget. Removing separate critics has little effect at $\frac{1}{8}\times$ but lowers returns at $\frac{1}{32}\times$. \Cref{fig:ablation}(b) shows that all variants without monitoring \emph{amplify} variance in most iterations (variance ratio above $1$), with ratios sometimes exceeding $1000$. For the full \ac{mfpg}-\ac{ppo}, the ratio exceeds $1$ in only a small fraction of iterations. Minibatches failing the variance reduction check fall back to \ac{hf}-only \ac{ppo} updates. \Cref{app:more-results} reports further diagnostics.

\section{Conclusions and Discussion}\label{sec:conclusions}

We introduced \ac{mfpg}-\ac{ppo} and its budget-aware variant to stabilize data-scarce on-policy \ac{rl}. Through designs that preserve cross-fidelity correlation and monitor estimator uncertainty, \ac{mfpg}-\ac{ppo} enables stable learning with per-update target-domain data batches as small as 4 real-robot episodes or up to $512\times$ smaller than in common GPU-simulation \ac{rl} pipelines~\citep{rudin2022learning}. Unlike methods that directly optimize biased objectives based on \ac{lf} data, \ac{mfpg}-\ac{ppo} uses \ac{lf} data to construct a variance-reducing control variate. In our comparison across tasks and \ac{hf} budgets, \ac{mfpg}-\ac{ppo} is the only method that improves upon \ac{ppo} trained on \ac{hf} data alone in nearly all settings, is never distinguishably worse, and scales monotonically with more \ac{hf} data.

Three directions call for future research. First, re-synchronizing coupled \ac{lf} rollouts to recorded \ac{hf} states while uncoupled \ac{lf} rollouts evolve freely can introduce bias in \cref{eq:mfpg-ppo} if the two \ac{lf} statistics differ in mean. Re-synchronizing both \ac{lf} batches would remove this bias but leave the variance associated with the re-synchronization states \emph{irreducible}, cf. \cref{lem:sync-baseline} in \cref{sec:free-baseline}. Our construction keeps the full variance reducible, but doing so without bias remains an open challenge. Second, raising cross-fidelity correlation, e.g., through representation alignment~\citep{lei2026mechanistic}, could strengthen variance reduction. Third, real-to-sim tools that turn scene scans into interactive simulators~\citep{jain2025polaris,worldlabs2026atlas} could ease the construction of coupled \ac{lf} simulators.

\subsection*{AI use statement}

In this work, we used generative AI tools to implement methods, clean and reformat datasets, interpret results, assist in the writing of proofs, and design or provide feedback on research methodology or experiments. Additionally, we used generative AI tools for sourcing/searching for information and to create or modify scientific figures or images, summarize or analyze existing literature, edit this paper to improve readability, draft parts of this paper, and suggest a structure for this paper. 

We did not use generative AI tools to generate synthetic data sets, help develop theoretical models or conceptual frameworks, formulate mathematical claims, provide critical ingredients for proving mathematical claims, propose or refine hypotheses, or discover research topics or identify gaps. Using generative AI tools to assist with translation, formulate questions for surveys or interviews, and support qualitative and thematic data analysis is not applicable to this work. 

We have reviewed all AI-assisted work. In particular, we reviewed and tested all AI-written code, checked every AI-assisted proof step, and verified all reported experimental results against the experiment logs. We take responsibility for the final content of this work, including text, claims, and artifacts produced with the aid of generative AI.

\subsection*{Reproducibility statement}

\Cref{alg:mfpg} presents the complete \ac{mfpg}-\ac{ppo} algorithm, and \cref{app:gating} specifies its monitoring checks for the control variate estimator. \Cref{sec:hyperparam} reports the shared hyperparameter tuning protocol and the final hyperparameters, \cref{app:method-implementation} describes the implementation of \ac{mfpg}-\ac{ppo} and all baselines, and \cref{app:experimental-protocol} details the simulation and real-robot tasks, their \ac{hf} and \ac{lf} environments, and the training and evaluation protocols. \Cref{app:proof-fit-granularity} proves \cref{lem:fit-granularity-vi}, \cref{sec:free-baseline} analyzes why the uncoupled \ac{lf} rollouts are not re-synchronized, and \cref{app:budget-allocation} derives the budget allocation. We will release our software implementation upon publication.

\subsection*{Acknowledgments}
This work was supported in part by the National Science Foundation (NSF) 
under Grants 2409535, 2535892, and 2214939, and in part by 
the Office of Naval Research (ONR) under Grant ONR N00014-25-1-2479.

\bibliography{iclr2027_conference}

\begin{thebibliography}{54}
\providecommand{\natexlab}[1]{#1}
\providecommand{\url}[1]{\texttt{#1}}
\expandafter\ifx\csname urlstyle\endcsname\relax
  \providecommand{\doi}[1]{doi: #1}\else
  \providecommand{\doi}{doi: \begingroup \urlstyle{rm}\Url}\fi

\bibitem[Bharthulwar et~al.(2026)Bharthulwar, Tao, and Su]{bharthulwar2026staggered}
Sid Bharthulwar, Stone Tao, and Hao Su.
\newblock Staggered environment resets improve massively parallel on-policy reinforcement learning.
\newblock \emph{Advances in Neural Information Processing Systems}, 38:\penalty0 133342--133375, 2026.

\bibitem[Bhola et~al.(2023)Bhola, Pawar, Balaprakash, and Maulik]{bhola2023multi}
Sahil Bhola, Suraj Pawar, Prasanna Balaprakash, and Romit Maulik.
\newblock Multi-fidelity reinforcement learning framework for shape optimization.
\newblock \emph{Journal of Computational Physics}, 482:\penalty0 112018, 2023.

\bibitem[Cheng et~al.(2020)Cheng, Yan, and Boots]{cheng2020trajectory}
Ching-An Cheng, Xinyan Yan, and Byron Boots.
\newblock Trajectory-wise control variates for variance reduction in policy gradient methods.
\newblock In \emph{Conference on Robot Learning}, pp.\  1379--1394. PMLR, 2020.

\bibitem[Cutler et~al.(2015)Cutler, Walsh, and How]{cutler2015real}
Mark Cutler, Thomas~J. Walsh, and Jonathan~P. How.
\newblock Real-world reinforcement learning via multifidelity simulators.
\newblock \emph{IEEE Transactions on Robotics}, 31\penalty0 (3):\penalty0 655--671, 2015.
\newblock \doi{10.1109/TRO.2015.2419431}.

\bibitem[Di-Castro et~al.(2021)Di-Castro, Di~Castro, and Mannor]{di2021sim}
Shirli Di-Castro, Dotan Di~Castro, and Shie Mannor.
\newblock Sim and real: Better together.
\newblock \emph{Advances in Neural Information Processing Systems}, 34:\penalty0 6868--6880, 2021.

\bibitem[Efron(1992)]{efron1992bootstrap}
Bradley Efron.
\newblock Bootstrap methods: another look at the jackknife.
\newblock In \emph{Breakthroughs in statistics: Methodology and distribution}, pp.\  569--593. Springer, 1992.

\bibitem[Eysenbach et~al.(2021)Eysenbach, Chaudhari, Asawa, Levine, and Salakhutdinov]{eysenbach2021offdynamics}
Benjamin Eysenbach, Shreyas Chaudhari, Swapnil Asawa, Sergey Levine, and Ruslan Salakhutdinov.
\newblock Off-dynamics reinforcement learning: Training for transfer with domain classifiers.
\newblock In \emph{International Conference on Learning Representations}, 2021.
\newblock URL \url{https://openreview.net/forum?id=eqBwg3AcIAK}.

\bibitem[Fisher(1921)]{fisher1921probable}
Ronald~A Fisher.
\newblock On the" probable error" of a coefficient of correlation deduced from a small sample.
\newblock \emph{Metron}, 1:\penalty0 3--32, 1921.

\bibitem[Greensmith et~al.(2004)Greensmith, Bartlett, and Baxter]{greensmith2004variance}
Evan Greensmith, Peter~L Bartlett, and Jonathan Baxter.
\newblock Variance reduction techniques for gradient estimates in reinforcement learning.
\newblock \emph{Journal of Machine Learning Research}, 5\penalty0 (Nov):\penalty0 1471--1530, 2004.

\bibitem[Gu et~al.(2017)Gu, Lillicrap, Ghahramani, Turner, and Levine]{gu2017qprop}
Shixiang Gu, Timothy Lillicrap, Zoubin Ghahramani, Richard~E. Turner, and Sergey Levine.
\newblock Q-prop: Sample-efficient policy gradient with an off-policy critic.
\newblock In \emph{International Conference on Learning Representations}, 2017.
\newblock URL \url{https://openreview.net/forum?id=SJ3rcZcxl}.

\bibitem[Guo et~al.(2024)Guo, Wang, Shi, Xu, and Liu]{guo2024off}
Yihong Guo, Yixuan Wang, Yuanyuan Shi, Pan Xu, and Anqi Liu.
\newblock Off-dynamics reinforcement learning via domain adaptation and reward augmented imitation.
\newblock \emph{Advances in Neural Information Processing Systems}, 37:\penalty0 136326--136360, 2024.

\bibitem[Guo et~al.(2026)Guo, Yang, Xu, and Liu]{guo2026mobody}
Yihong Guo, Yu~Yang, Pan Xu, and Anqi Liu.
\newblock Mobody: Model-based off-dynamics offline reinforcement learning.
\newblock In \emph{International Conference on Learning Representations}, volume 2026, pp.\  6985--7008, 2026.

\bibitem[Haarnoja et~al.(2018)Haarnoja, Zhou, Abbeel, and Levine]{haarnoja2018soft}
Tuomas Haarnoja, Aurick Zhou, Pieter Abbeel, and Sergey Levine.
\newblock Soft actor-critic: Off-policy maximum entropy deep reinforcement learning with a stochastic actor.
\newblock In \emph{International conference on machine learning}, pp.\  1861--1870. Pmlr, 2018.

\bibitem[Hessel et~al.(2018)Hessel, Modayil, Van~Hasselt, Schaul, Ostrovski, Dabney, Horgan, Piot, Azar, and Silver]{hessel2018rainbow}
Matteo Hessel, Joseph Modayil, Hado Van~Hasselt, Tom Schaul, Georg Ostrovski, Will Dabney, Dan Horgan, Bilal Piot, Mohammad Azar, and David Silver.
\newblock Rainbow: Combining improvements in deep reinforcement learning.
\newblock In \emph{Proceedings of the AAAI conference on artificial intelligence}, volume~32, 2018.

\bibitem[Hyun et~al.(2026)Hyun, Chaudhuri, Willcox, and Kim]{HYUN2026118714}
Jaeyub Hyun, Anirban Chaudhuri, Karen~E. Willcox, and H.~Alicia Kim.
\newblock Robust topology optimization of continuum structures under material uncertainties using multifidelity monte carlo.
\newblock \emph{Computer Methods in Applied Mechanics and Engineering}, 451:\penalty0 118714, 2026.
\newblock ISSN 0045-7825.
\newblock \doi{https://doi.org/10.1016/j.cma.2025.118714}.
\newblock URL \url{https://www.sciencedirect.com/science/article/pii/S0045782525009867}.

\bibitem[Jain et~al.(2025)Jain, Zhang, Arora, Chen, Torne, Irshad, Zakharov, Wang, Levine, Finn, et~al.]{jain2025polaris}
Arhan Jain, Mingtong Zhang, Kanav Arora, William Chen, Marcel Torne, Muhammad~Zubair Irshad, Sergey Zakharov, Yue Wang, Sergey Levine, Chelsea Finn, et~al.
\newblock Polaris: Scalable real-to-sim evaluations for generalist robot policies.
\newblock \emph{arXiv preprint arXiv:2512.16881}, 2025.

\bibitem[Khairy \& Balaprakash(2024)Khairy and Balaprakash]{khairy2024multi}
Sami Khairy and Prasanna Balaprakash.
\newblock Multi-fidelity reinforcement learning with control variates.
\newblock \emph{Neurocomputing}, 597:\penalty0 127963, 2024.

\bibitem[Kingma \& Welling(2013)Kingma and Welling]{kingma2013auto}
Diederik~P Kingma and Max Welling.
\newblock Auto-encoding variational bayes.
\newblock \emph{arXiv preprint arXiv:1312.6114}, 2013.

\bibitem[Le~Pham~Van et~al.(2024)Le~Pham~Van, The~Tran, and Gupta]{van2024policy}
Linh Le~Pham~Van, Hung The~Tran, and Sunil Gupta.
\newblock Policy learning for off-dynamics rl with deficient support.
\newblock In \emph{Proceedings of the 23rd International Conference on Autonomous Agents and Multiagent Systems}, AAMAS '24, pp.\  1093–1100, Richland, SC, 2024. International Foundation for Autonomous Agents and Multiagent Systems.
\newblock ISBN 9798400704864.

\bibitem[Lei et~al.(2026)Lei, Liu, Maddukuri, Jiang, and Zhu]{lei2026mechanistic}
Yu~Lei, Minghuan Liu, Abhiram Maddukuri, Zhenyu Jiang, and Yuke Zhu.
\newblock A mechanistic analysis of sim-and-real co-training in generative robot policies.
\newblock \emph{arXiv preprint arXiv:2604.13645}, 2026.

\bibitem[Levy et~al.(2026)Levy, Westenbroek, Huang, Palafox, Yin, Omidshafiei, Kim, Gupta, and Fridovich-Keil]{levy2026simulation}
Jacob Levy, Tyler Westenbroek, Kevin Huang, Fernando Palafox, Patrick Yin, Shayegan Omidshafiei, Dong-Ki Kim, Abhishek Gupta, and David Fridovich-Keil.
\newblock Simulation distillation: Pretraining world models in simulation for rapid real-world adaptation.
\newblock \emph{arXiv preprint arXiv:2603.15759}, 2026.

\bibitem[Liu et~al.(2022)Liu, Hongyin, and Wang]{liu2022dara}
Jinxin Liu, Zhang Hongyin, and Donglin Wang.
\newblock {DARA}: Dynamics-aware reward augmentation in offline reinforcement learning.
\newblock In \emph{International Conference on Learning Representations}, 2022.
\newblock URL \url{https://openreview.net/forum?id=9SDQB3b68K}.

\bibitem[Liu et~al.(2025)Liu, Neary, Gupta, Ellis, Topcu, and Fridovich-Keil]{liu2025multi_v1}
Xinjie Liu, Cyrus Neary, Kushagra Gupta, Christian Ellis, Ufuk Topcu, and David Fridovich-Keil.
\newblock Multi-fidelity policy gradient algorithms.
\newblock \emph{arXiv preprint arXiv:2503.05696v1}, 2025.
\newblock URL \url{https://arxiv.org/abs/2503.05696v1}.

\bibitem[Liu et~al.(2026)Liu, Neary, Gupta, Suttle, Ellis, ufuk topcu, and Fridovich-Keil]{liu2026a}
Xinjie Liu, Cyrus Neary, Kushagra Gupta, Wesley~A. Suttle, Christian Ellis, ufuk topcu, and David Fridovich-Keil.
\newblock A multi-fidelity control variate approach for policy gradient estimation.
\newblock \emph{Transactions on Machine Learning Research}, 2026.
\newblock ISSN 2835-8856.
\newblock URL \url{https://openreview.net/forum?id=zAo0L7Dcqt}.
\newblock J2C Certification.

\bibitem[Ljung(1998)]{ljung1998system}
Lennart Ljung.
\newblock System identification.
\newblock In \emph{Signal analysis and prediction}, pp.\  163--173. Springer, 1998.

\bibitem[Lu et~al.(2025)Lu, Guo, Zhang, Zhou, Jiang, Gao, Tang, and Wang]{lu2025vla}
Guanxing Lu, Wenkai Guo, Chubin Zhang, Yuheng Zhou, Haonan Jiang, Zifeng Gao, Yansong Tang, and Ziwei Wang.
\newblock Vla-rl: Towards masterful and general robotic manipulation with scalable reinforcement learning.
\newblock \emph{arXiv preprint arXiv:2505.18719}, 2025.

\bibitem[Lyu et~al.(2024)Lyu, Bai, Yang, Lu, and Li]{lyu2024cross}
Jiafei Lyu, Chenjia Bai, Jing-Wen Yang, Zongqing Lu, and Xiu Li.
\newblock Cross-domain policy adaptation by capturing representation mismatch.
\newblock In Ruslan Salakhutdinov, Zico Kolter, Katherine Heller, Adrian Weller, Nuria Oliver, Jonathan Scarlett, and Felix Berkenkamp (eds.), \emph{Proceedings of the 41st International Conference on Machine Learning}, volume 235 of \emph{Proceedings of Machine Learning Research}, pp.\  33638--33663. PMLR, 21--27 Jul 2024.
\newblock URL \url{https://proceedings.mlr.press/v235/lyu24a.html}.

\bibitem[Lyu et~al.(2025)Lyu, Yan, Qiao, Liu, Ma, Ye, Yang, Lu, and Li]{lyu2025crossdomain}
Jiafei Lyu, Mengbei Yan, Zhongjian Qiao, Runze Liu, Xiaoteng Ma, Deheng Ye, Jing-Wen Yang, Zongqing Lu, and Xiu Li.
\newblock Cross-domain offline policy adaptation with optimal transport and dataset constraint.
\newblock In \emph{The Thirteenth International Conference on Learning Representations}, 2025.
\newblock URL \url{https://openreview.net/forum?id=LRrbD8EZJl}.

\bibitem[Maddukuri et~al.(2025)Maddukuri, Jiang, Chen, Nasiriany, Xie, Fang, Huang, Wang, Xu, Chernyadev, Reed, Goldberg, Mandlekar, Fan, and Zhu]{maddukuri2025sim}
Abhiram Maddukuri, Zhenyu Jiang, Lawrence~Yunliang Chen, Soroush Nasiriany, Yuqi Xie, Yu~Fang, Wenqi Huang, Zu~Wang, Zhenjia Xu, Nikita Chernyadev, Scott Reed, Ken Goldberg, Ajay Mandlekar, Linxi Fan, and Yuke Zhu.
\newblock {Sim-and-Real Co-Training: A Simple Recipe for Vision-Based Robotic Manipulation}.
\newblock In \emph{Proceedings of Robotics: Science and Systems}, LosAngeles, CA, USA, June 2025.
\newblock \doi{10.15607/RSS.2025.XXI.109}.

\bibitem[Marco et~al.(2017)Marco, Berkenkamp, Hennig, Schoellig, Krause, Schaal, and Trimpe]{marco2017virtual}
Alonso Marco, Felix Berkenkamp, Philipp Hennig, Angela~P Schoellig, Andreas Krause, Stefan Schaal, and Sebastian Trimpe.
\newblock Virtual vs. real: Trading off simulations and physical experiments in reinforcement learning with bayesian optimization.
\newblock In \emph{2017 IEEE International Conference on Robotics and Automation (ICRA)}, pp.\  1557--1563. IEEE, 2017.

\bibitem[Mittal et~al.(2025)Mittal, Roth, Tigue, Richard, Zhang, Du, Serrano-Muñoz, Yao, Zurbrügg, Rudin, Wawrzyniak, Rakhsha, Denzler, Heiden, Borovicka, Ahmed, Akinola, Anwar, Carlson, Feng, Garg, Gasoto, Gulich, Guo, Gussert, Hansen, Kulkarni, Li, Liu, Makoviychuk, Malczyk, Mazhar, Moghani, Murali, Noseworthy, Poddubny, Ratliff, Rehberg, Schwarke, Singh, Smith, Tang, Thaker, Trepte, Wyk, Yu, Millane, Ramasamy, Steiner, Subramanian, Volk, Chen, Jawale, Kuruttukulam, Lin, Mandlekar, Patzwaldt, Welsh, Zhao, Anes, Lafleche, Moënne-Loccoz, Park, Stepinski, Gelder, Amevor, Carius, Chang, Chen, de~Heras~Ciechomski, Daviet, Mohajerani, von Muralt, Reutskyy, Sauter, Schirm, Shi, Terdiman, Vilella, Widmer, Yeoman, Chen, Grizan, Li, Li, Smith, Wiltz, Alexis, Chang, Chu, Fan, Farshidian, Handa, Huang, Hutter, Narang, Pouya, Sheng, Zhu, Macklin, Moravanszky, Reist, Guo, Hoeller, and State]{mittal2025isaaclab}
Mayank Mittal, Pascal Roth, James Tigue, Antoine Richard, Octi Zhang, Peter Du, Antonio Serrano-Muñoz, Xinjie Yao, René Zurbrügg, Nikita Rudin, Lukasz Wawrzyniak, Milad Rakhsha, Alain Denzler, Eric Heiden, Ales Borovicka, Ossama Ahmed, Iretiayo Akinola, Abrar Anwar, Mark~T. Carlson, Ji~Yuan Feng, Animesh Garg, Renato Gasoto, Lionel Gulich, Yijie Guo, M.~Gussert, Alex Hansen, Mihir Kulkarni, Chenran Li, Wei Liu, Viktor Makoviychuk, Grzegorz Malczyk, Hammad Mazhar, Masoud Moghani, Adithyavairavan Murali, Michael Noseworthy, Alexander Poddubny, Nathan Ratliff, Welf Rehberg, Clemens Schwarke, Ritvik Singh, James~Latham Smith, Bingjie Tang, Ruchik Thaker, Matthew Trepte, Karl~Van Wyk, Fangzhou Yu, Alex Millane, Vikram Ramasamy, Remo Steiner, Sangeeta Subramanian, Clemens Volk, CY~Chen, Neel Jawale, Ashwin~Varghese Kuruttukulam, Michael~A. Lin, Ajay Mandlekar, Karsten Patzwaldt, John Welsh, Huihua Zhao, Fatima Anes, Jean-Francois Lafleche, Nicolas Moënne-Loccoz, Soowan Park, Rob Stepinski, Dirk~Van Gelder, Chris Amevor, Jan Carius, Jumyung Chang, Anka~He Chen, Pablo de~Heras~Ciechomski, Gilles Daviet, Mohammad Mohajerani, Julia von Muralt, Viktor Reutskyy, Michael Sauter, Simon Schirm, Eric~L. Shi, Pierre Terdiman, Kenny Vilella, Tobias Widmer, Gordon Yeoman, Tiffany Chen, Sergey Grizan, Cathy Li, Lotus Li, Connor Smith, Rafael Wiltz, Kostas Alexis, Yan Chang, David Chu, Linxi~"Jim" Fan, Farbod Farshidian, Ankur Handa, Spencer Huang, Marco Hutter, Yashraj Narang, Soha Pouya, Shiwei Sheng, Yuke Zhu, Miles Macklin, Adam Moravanszky, Philipp Reist, Yunrong Guo, David Hoeller, and Gavriel State.
\newblock Isaac lab: A gpu-accelerated simulation framework for multi-modal robot learning.
\newblock \emph{arXiv preprint arXiv:2511.04831}, 2025.
\newblock URL \url{https://arxiv.org/abs/2511.04831}.

\bibitem[Nelson(1987)]{nelson1987control}
Barry~L Nelson.
\newblock On control variate estimators.
\newblock \emph{Computers \& Operations Research}, 14\penalty0 (3):\penalty0 219--225, 1987.

\bibitem[Ng \& Willcox(2014)Ng and Willcox]{leo2014multifidelity}
Leo W.~T. Ng and Karen~E. Willcox.
\newblock Multifidelity approaches for optimization under uncertainty.
\newblock \emph{International Journal for Numerical Methods in Engineering}, 100\penalty0 (10):\penalty0 746--772, 2014.
\newblock \doi{https://doi.org/10.1002/nme.4761}.
\newblock URL \url{https://onlinelibrary.wiley.com/doi/abs/10.1002/nme.4761}.

\bibitem[Niu et~al.(2022)Niu, sharma, Qiu, Li, Zhou, HU, and Zhan]{niu2022trust}
Haoyi Niu, shubham sharma, Yiwen Qiu, Ming Li, Guyue Zhou, Jianming HU, and Xianyuan Zhan.
\newblock When to trust your simulator: Dynamics-aware hybrid offline-and-online reinforcement learning.
\newblock In S.~Koyejo, S.~Mohamed, A.~Agarwal, D.~Belgrave, K.~Cho, and A.~Oh (eds.), \emph{Advances in Neural Information Processing Systems}, volume~35, pp.\  36599--36612. Curran Associates, Inc., 2022.
\newblock \doi{10.52202/068431-2652}.
\newblock URL \url{https://proceedings.neurips.cc/paper_files/paper/2022/file/ed3cd2520148b577039adfade82a5566-Paper-Conference.pdf}.

\bibitem[Niu et~al.(2025)Niu, Ji, Liu, Zhao, Zhu, Zheng, Huang, Zhou, Hu, and Zhan]{niu2025h2o+}
Haoyi Niu, Tianying Ji, Bingqi Liu, Haocheng Zhao, Xiangyu Zhu, Jianying Zheng, Pengfei Huang, Guyue Zhou, Jianming Hu, and Xianyuan Zhan.
\newblock H2o+: an improved framework for hybrid offline-and-online rl with dynamics gaps.
\newblock In \emph{2025 IEEE International Conference on Robotics and Automation (ICRA)}, pp.\  1421--1428. IEEE, 2025.

\bibitem[Osband et~al.(2016)Osband, Blundell, Pritzel, and Van~Roy]{osband2016deep}
Ian Osband, Charles Blundell, Alexander Pritzel, and Benjamin Van~Roy.
\newblock Deep exploration via bootstrapped dqn.
\newblock \emph{Advances in neural information processing systems}, 29, 2016.

\bibitem[Ouyang et~al.(2022)Ouyang, Wu, Jiang, Almeida, Wainwright, Mishkin, Zhang, Agarwal, Slama, Ray, et~al.]{ouyang2022training}
Long Ouyang, Jeffrey Wu, Xu~Jiang, Diogo Almeida, Carroll Wainwright, Pamela Mishkin, Chong Zhang, Sandhini Agarwal, Katarina Slama, Alex Ray, et~al.
\newblock Training language models to follow instructions with human feedback.
\newblock \emph{Advances in neural information processing systems}, 35:\penalty0 27730--27744, 2022.

\bibitem[Peherstorfer et~al.(2016)Peherstorfer, Willcox, and Gunzburger]{peherstorfer2016optimal}
Benjamin Peherstorfer, Karen Willcox, and Max Gunzburger.
\newblock Optimal model management for multifidelity monte carlo estimation.
\newblock \emph{SIAM Journal on Scientific Computing}, 38\penalty0 (5):\penalty0 A3163--A3194, 2016.

\bibitem[Peng et~al.(2018)Peng, Andrychowicz, Zaremba, and Abbeel]{peng2018sim}
Xue~Bin Peng, Marcin Andrychowicz, Wojciech Zaremba, and Pieter Abbeel.
\newblock Sim-to-real transfer of robotic control with dynamics randomization.
\newblock In \emph{2018 IEEE international conference on robotics and automation (ICRA)}, pp.\  3803--3810. IEEE, 2018.

\bibitem[Ramos et~al.(2019)Ramos, Possas, and Fox]{ramos2019bayessim}
Fabio Ramos, Rafael~Carvalhaes Possas, and Dieter Fox.
\newblock Bayessim: adaptive domain randomization via probabilistic inference for robotics simulators.
\newblock \emph{arXiv preprint arXiv:1906.01728}, 2019.

\bibitem[Rudin et~al.(2022)Rudin, Hoeller, Reist, and Hutter]{rudin2022learning}
Nikita Rudin, David Hoeller, Philipp Reist, and Marco Hutter.
\newblock Learning to walk in minutes using massively parallel deep reinforcement learning.
\newblock In \emph{Conference on robot learning}, pp.\  91--100. PMLR, 2022.

\bibitem[Ryou et~al.(2026)Ryou, Wang, and Karaman]{ryou2026multi}
Gilhyun Ryou, Geoffrey Wang, and Sertac Karaman.
\newblock Multi-fidelity reinforcement learning for time-optimal quadrotor re-planning.
\newblock \emph{The International Journal of Robotics Research}, 45\penalty0 (4):\penalty0 645--669, 2026.

\bibitem[Sabatini et~al.(2026)Sabatini, Li, and Hutter]{sabatini2026bridging}
Gianluca Sabatini, Chenhao Li, and Marco Hutter.
\newblock Bridging the gap: Enabling soft actor critic for high performance legged locomotion.
\newblock \emph{arXiv preprint arXiv:2605.24975}, 2026.

\bibitem[Schulman et~al.(2015)Schulman, Moritz, Levine, Jordan, and Abbeel]{schulman2015high}
John Schulman, Philipp Moritz, Sergey Levine, Michael Jordan, and Pieter Abbeel.
\newblock High-dimensional continuous control using generalized advantage estimation.
\newblock \emph{arXiv preprint arXiv:1506.02438}, 2015.

\bibitem[Schulman et~al.(2017)Schulman, Wolski, Dhariwal, Radford, and Klimov]{schulman2017proximal}
John Schulman, Filip Wolski, Prafulla Dhariwal, Alec Radford, and Oleg Klimov.
\newblock Proximal policy optimization algorithms.
\newblock \emph{arXiv preprint arXiv:1707.06347}, 2017.

\bibitem[Schwarke et~al.(2025)Schwarke, Mittal, Rudin, Hoeller, and Hutter]{schwarke2025rsl}
Clemens Schwarke, Mayank Mittal, Nikita Rudin, David Hoeller, and Marco Hutter.
\newblock Rsl-rl: A learning library for robotics research.
\newblock \emph{arXiv preprint arXiv:2509.10771}, 2025.

\bibitem[Sifaou \& Simeone(2025)Sifaou and Simeone]{sifaou2025multi}
Houssem Sifaou and Osvaldo Simeone.
\newblock Multi-fidelity hybrid reinforcement learning via information gain maximization.
\newblock \emph{arXiv preprint arXiv:2509.14848}, 2025.

\bibitem[Smith et~al.(2022)Smith, Kew, Peng, Ha, Tan, and Levine]{smith2022legged}
Laura Smith, J~Chase Kew, Xue~Bin Peng, Sehoon Ha, Jie Tan, and Sergey Levine.
\newblock Legged robots that keep on learning: Fine-tuning locomotion policies in the real world.
\newblock In \emph{2022 international conference on robotics and automation (ICRA)}, pp.\  1593--1599. IEEE, 2022.

\bibitem[Tao et~al.(2024)Tao, Xiang, Shukla, Qin, Hinrichsen, Yuan, Bao, Lin, Liu, Chan, et~al.]{tao2024maniskill3}
Stone Tao, Fanbo Xiang, Arth Shukla, Yuzhe Qin, Xander Hinrichsen, Xiaodi Yuan, Chen Bao, Xinsong Lin, Yulin Liu, Tse-kai Chan, et~al.
\newblock Maniskill3: Gpu parallelized robotics simulation and rendering for generalizable embodied ai.
\newblock \emph{arXiv preprint arXiv:2410.00425}, 2024.

\bibitem[Team(2026)]{worldlabs2026atlas}
World~Labs Team.
\newblock Atlas: A world model for spatial intelligence.
\newblock \emph{World Labs Blog}, 2026.
\newblock https://www.worldlabs.ai/blog/atlas.

\bibitem[Tobin et~al.(2017)Tobin, Fong, Ray, Schneider, Zaremba, and Abbeel]{tobin2017domain}
Josh Tobin, Rachel Fong, Alex Ray, Jonas Schneider, Wojciech Zaremba, and Pieter Abbeel.
\newblock Domain randomization for transferring deep neural networks from simulation to the real world.
\newblock In \emph{2017 IEEE/RSJ international conference on intelligent robots and systems (IROS)}, pp.\  23--30. IEEE, 2017.

\bibitem[Wei et~al.(2025)Wei, Agarwal, Chen, Bosworth, Pfaff, and Tedrake]{wei2025empirical}
Adam Wei, Abhinav Agarwal, Boyuan Chen, Rohan Bosworth, Nicholas Pfaff, and Russ Tedrake.
\newblock Empirical analysis of sim-and-real cotraining of diffusion policies for planar pushing from pixels.
\newblock In \emph{2025 IEEE/RSJ International Conference on Intelligent Robots and Systems (IROS)}, pp.\  5625--5632. IEEE, 2025.

\bibitem[Williams(1992)]{williams1992simple}
Ronald~J Williams.
\newblock Simple statistical gradient-following algorithms for connectionist reinforcement learning.
\newblock \emph{Machine learning}, 8:\penalty0 229--256, 1992.

\bibitem[Xu et~al.(2023)Xu, Bai, Ma, Wang, Zhao, Wang, Li, and Li]{xu2023cross}
Kang Xu, Chenjia Bai, Xiaoteng Ma, Dong Wang, Bin Zhao, Zhen Wang, Xuelong Li, and Wei Li.
\newblock Cross-domain policy adaptation via value-guided data filtering.
\newblock \emph{Advances in Neural Information Processing Systems}, 36:\penalty0 73395--73421, 2023.

\end{thebibliography}
\bibliographystyle{iclr2027_conference}

\clearpage
\appendix
\section{Appendix}
\crefalias{section}{appendix}
\crefalias{subsection}{appendix}
\crefalias{subsubsection}{appendix}

\subsection{Proof of Lemma~\ref{lem:fit-granularity-vi}}
\label{app:proof-fit-granularity}

\Cref{lem:fit-granularity-vi} holds under the following regularity conditions: (i) the per-step statistics have finite second moments; (ii) $\operatorname{Var}(B^\low) > 0$ (if $\operatorname{Var}(B^\low)$ vanishes, the variance in question does not depend on $c$); (iii) every step of every episode is coupled, so the pair set of \cref{eq:mfpg-ppo} contains exactly one pair per $(k,t)$ and $|\mathcal{P}| = KH$.

\begin{proof} 
By condition (iii), $|\mathcal{P}| = KH$.
Grouping the sum by episode,
\begin{equation*}
L^{\mathrm{MF}}(\policyparams)
= \frac{1}{KH}\sum_{k=1}^{K}\sum_{t=1}^{H}\Big[\RV^\high_{k,t}
+ c\big(\RV^\low_{k,t} - \hat{\mu}^\low\big)\Big]
= \frac{1}{KH}\sum_{k=1}^{K}\big(B^\high_k + c\,B^\low_k\big)
\;-\; c\,\hat{\mu}^\low .
\end{equation*}
Since $\hat{\mu}^\low$ contributes no variance, and since
the episodes are independent and identically distributed,
\begin{align*}
\operatorname{Var}\big(L^{\mathrm{MF}}\big)
&= \frac{1}{K^2H^2}\sum_{k=1}^{K}\operatorname{Var}\big(B^\high_k + c\,B^\low_k\big)\\
&= \frac{1}{KH^2}\Big[\operatorname{Var}(B^\high)
+ 2c\,\operatorname{Cov}(B^\high, B^\low)
+ c^2\operatorname{Var}(B^\low)\Big],
\end{align*}
a strictly convex quadratic in $c$ whose unique minimizer is
$c^* = -\operatorname{Cov}(B^\high, B^\low)/\operatorname{Var}(B^\low)$. 
Covariance is linear in each of its two arguments, so
\begin{align*}
\operatorname{Cov}(B^\high,B^\low)
&= \operatorname{Cov}\!\left(
    \sum_{t=1}^{H}\RV^\high_t,
    \sum_{t'=1}^{H}\RV^\low_{t'}\right)\\
&= \sum_{t=1}^{H}\operatorname{Cov}\!\left(
    \RV^\high_t,\sum_{t'=1}^{H}\RV^\low_{t'}\right)\\
&= \sum_{t=1}^{H}\sum_{t'=1}^{H}
    \operatorname{Cov}(\RV^\high_t,\RV^\low_{t'}).
\end{align*}
Likewise, $\operatorname{Var}(B^\low) = \operatorname{Cov}(B^\low, B^\low) = \sum_{t=1}^{H}\sum_{t'=1}^{H}\operatorname{Cov}(\RV^\low_t, \RV^\low_{t'})$.

The per-step fit pools the $KH$ pairs and computes
moments centered at the means
$\bar{\RV}^\high := \frac{1}{KH}\sum_{k,t}\RV^\high_{k,t}$ and $\bar{\RV}^\low$:
\begin{equation*}
\widehat{\operatorname{Cov}}_{\mathrm{step}}
:= \frac{1}{KH}\sum_{k,t}\big(\RV^\high_{k,t}-\bar{\RV}^\high\big)
\big(\RV^\low_{k,t}-\bar{\RV}^\low\big),
\qquad
\widehat{\operatorname{Var}}_{\mathrm{step}}
:= \frac{1}{KH}\sum_{k,t}\big(\RV^\low_{k,t}-\bar{\RV}^\low\big)^2 .
\end{equation*}
The per-step fit is the ratio $\hat{c}_{\mathrm{step}} := -\widehat{\operatorname{Cov}}_{\mathrm{step}}/\widehat{\operatorname{Var}}_{\mathrm{step}}$.
By the strong law of large numbers over the independent episodes, as $K \to \infty$, almost surely
\begin{equation*}
\widehat{\operatorname{Cov}}_{\mathrm{step}}
\;\longrightarrow\;
\frac{1}{H}\sum_{t=1}^{H}\Big[\operatorname{Cov}\big(\RV^\high_t, \RV^\low_t\big)
+ \big(\mu^\high_t - \bar{\mu}^\high\big)\big(\mu^\low_t - \bar{\mu}^\low\big)\Big],
\end{equation*}
where $\mu_t := \E[\RV_t]$ and $\bar{\mu} := \frac{1}{H}\sum_{t}\mu_t$, and
analogously for $\widehat{\operatorname{Var}}_{\mathrm{step}}$. Under the
assumption that the per-step means are constant in $t$
\begin{equation*}
-\frac{\widehat{\operatorname{Cov}}_{\mathrm{step}}}
{\widehat{\operatorname{Var}}_{\mathrm{step}}}
\;\longrightarrow\;
-\frac{\sum_{t}\operatorname{Cov}(\RV^\high_t, \RV^\low_t)}
{\sum_{t}\operatorname{Var}(\RV^\low_t)}
= c_{\mathrm{step}} .
\end{equation*}

Splitting each double sum above into its same-step terms with $t = t'$ and its cross-step terms with $t \neq t'$, and using $\operatorname{Cov}(\RV^\low_t, \RV^\low_t) = \operatorname{Var}(\RV^\low_t)$, gives
\begin{equation*}
\operatorname{Cov}(B^\high, B^\low)
= \sum_{t}\operatorname{Cov}(\RV^\high_t, \RV^\low_t)
+ \sum_{t\neq t'}\operatorname{Cov}(\RV^\high_t, \RV^\low_{t'}),
\end{equation*}
\begin{equation*}
\operatorname{Var}(B^\low)
= \sum_{t}\operatorname{Var}(\RV^\low_t)
+ \sum_{t\neq t'}\operatorname{Cov}(\RV^\low_t, \RV^\low_{t'}).
\end{equation*}
If all cross-step covariances vanish, both off-diagonal sums are zero and
$c^* = c_{\mathrm{step}}$. They do not coincide in general: with $H = 2$,
zero-mean statistics, and $\RV^\low_1 := \RV^\high_2$,
$\RV^\low_2 := \RV^\high_1$ for independent $\RV^\high_1, \RV^\high_2$ with
positive variance, every
same-step covariance vanishes, so $c_{\mathrm{step}} = 0$, while
$B^\low = B^\high$ gives $c^* = -1$.
\end{proof}

\textbf{On the constant-means assumption.} Without the assumption that the per-step means $\E[\RV^\high_t]$ and $\E[\RV^\low_t]$ are constant in $t$, the limits
retain the terms
$\frac{1}{H}\sum_{t}(\mu^\high_t - \bar{\mu}^\high)(\mu^\low_t - \bar{\mu}^\low)$
in the numerator and $\frac{1}{H}\sum_{t}(\mu^\low_t - \bar{\mu}^\low)^2$ in the
denominator. The per-step fit then mixes the dependence structure with the
variation of the mean over the horizon. The assumption isolates the former, which is the effect of interest.

\subsection{Why Are the Uncoupled \ac{lf} Rollouts Not Re-synchronized?}
\label{sec:free-baseline}

The coupled \ac{lf} rollouts are re-synchronized to recorded \ac{hf} states to preserve correlation (\cref{sec:mfpg-ppo}), whereas the uncoupled \ac{lf} rollouts (which supply the \ac{lf} baseline $\hat{\mu}^\low$ in \cref{eq:mfpg-ppo}) evolve freely. Re-synchronizing the uncoupled \ac{lf} rollouts to the same states would remove the bias noted in \cref{sec:mfpg-ppo}, since the coupled and uncoupled \ac{lf} rollouts would then follow the same distribution. However, we show that a re-synchronized \ac{lf} baseline leaves a variance component that no fixed coefficient $c$ can reduce, no matter how faithful the \ac{lf} simulator is.

Fix the policy and let $C$ collect the recorded \ac{hf} states and the matched task variables for re-synchronization, and let $(\hfc{\RV^\high}, \lfc{\RV^\low})$ be the paired \ac{hf} and coupled \ac{lf} statistics, with finite positive variances and correlation $\rho$. By the law of total variance, $\Var(\RV^\high) = \Var(\E[\RV^\high \given C]) + \E[\Var(\RV^\high \given C)]$. We call the two terms the \emph{between-state} and the \emph{within-state} variance, with respect to the re-synchronization states in $C$. With freely evolving uncoupled \ac{lf} rollouts, the \ac{lf} baseline is a constant $\mu^\low_{\mathrm{u}}$, the mean of the uncoupled \ac{lf} statistic. If the uncoupled \ac{lf} rollouts are instead re-synchronized to $C$, the \ac{lf} baseline becomes the conditional mean $\mu^\low_C := \E[\RV^\low \given C]$. For a coefficient $c$, the control-variate estimators are
\begin{equation*}
Z_{\mathrm{u}} := \RV^\high + c\,\big(\RV^\low - \mu^\low_{\mathrm{u}}\big),
\qquad
Z_C := \RV^\high + c\,\big(\RV^\low - \mu^\low_C\big).
\end{equation*}
We treat both \ac{lf} baselines as exact. Finitely many uncoupled \ac{lf} rollouts add estimation noise to either.

\begin{lemma}[A re-synchronized \ac{lf} baseline cannot reduce the between-state variance]\label{lem:sync-baseline}
For every $c$, $\E[Z_C] = \E[\RV^\high]$ and
\begin{equation*}
\Var(Z_C) = \Var\big(\E[\RV^\high \given C]\big) + \E\big[\Var(\RV^\high + c\,\RV^\low \given C)\big]
\;\ge\; \Var\big(\E[\RV^\high \given C]\big),
\end{equation*}
whereas $\E[Z_{\mathrm{u}}] = \E[\RV^\high] + c\,\big(\E[\RV^\low] - \mu^\low_{\mathrm{u}}\big)$ and $\min_c \Var(Z_{\mathrm{u}}) = (1-\rho^2)\Var(\RV^\high)$.
\end{lemma}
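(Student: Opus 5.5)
The plan is to handle the two estimators separately, using only the tower property and the law of total variance, conditioning throughout on the re-synchronization variable $C$.

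\emph{Re-synchronized baseline $Z_C$.} For unbiasedness, take expectations and use $\E[\mu^\low_C] = \E[\E[\RV^\low \given C]] = \E[\RV^\low]$. The corrector $c(\RV^\low - \mu^\low_C)$ then has zero mean for every fixed $c$, so $\E[Z_C] = \E[\RV^\high]$. For the variance, apply the law of total variance to $Z_C$ with respect to $C$.
\begin{itemize}
\item The conditional mean is $\E[Z_C \given C] = \E[\RV^\high \given C] + c\,(\E[\RV^\low \given C] - \mu^\low_C) = \E[\RV^\high \given C]$, because $\mu^\low_C$ is $C$-measurable and cancels exactly. This gives the between-state term $\Var(\E[\RV^\high \given C])$.
\item Conditional on $C$, the baseline $\mu^\low_C$ is a constant, so $\Var(Z_C \given C) = \Var(\RV^\high + c\,\RV^\low \given C)$. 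Taking expectations gives the second term.
\end{itemize}
The inequality follows because the expected conditional variance is nonnegative. This also explains the name: no choice of $c$ touches the first term, since it does not involve $c$ at all.

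\emph{Freely evolving baseline $Z_{\mathrm{u}}$.} Here $\mu^\low_{\mathrm{u}}$ is a deterministic constant, so the mean formula follows directly from linearity. The variance is $\Var(Z_{\mathrm{u}}) = \Var(\RV^\high) + 2c\,\Cov(\RV^\high,\RV^\low) + c^2\Var(\RV^\low)$, since shifting by a constant does not change variance. This is a strictly convex quadratic in $c$, because $\Var(\RV^\low) > 0$. Its minimizer is $c^* = -\Cov(\RV^\high,\RV^\low)/\Var(\RV^\low)$, and substituting gives $\Var(\RV^\high) - \Cov^2/\Var(\RV^\low) = (1-\rho^2)\Var(\RV^\high)$, as recalled in \cref{sec:tmlr-mfpg}.

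The main obstacle is bookkeeping rather than depth. One must justify that $\mu^\low_C$ is a measurable function of $C$ and integrable, so that it can be pulled out of the conditional expectation and variance. Finite second moments of both statistics give this via conditional Jensen. One must also state clearly that treating both baselines as exact is what lets $Z_{\mathrm{u}}$ keep its full variance reducible, at the price of the bias term $c(\E[\RV^\low] - \mu^\low_{\mathrm{u}})$.
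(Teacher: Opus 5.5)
Your proposal is correct and follows essentially the same argument as the paper: $\mu^\low_C$ is a function of $C$, so $\E[Z_C \given C] = \E[\RV^\high \given C]$ and $\Var(Z_C \given C) = \Var(\RV^\high + c\,\RV^\low \given C)$, and the laws of total expectation and total variance then give the claims for $Z_C$. For $Z_{\mathrm{u}}$, you minimize the quadratic $\Var(\RV^\high + c\,\RV^\low)$ in $c$; your unbiasedness step via $\E[\mu^\low_C] = \E[\RV^\low]$ and your integrability remarks are only cosmetic additions to that same route.
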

\begin{proof}
Since $\mu^\low_C$ is a function of $C$, $\E[Z_C \given C] = \E[\RV^\high \given C]$ and $\Var(Z_C \given C) = \Var(\RV^\high + c\,\RV^\low \given C)$. The laws of total expectation and variance give the claims for $Z_C$. The \ac{lf} baseline of $Z_{\mathrm{u}}$ is a constant, so $\Var(Z_{\mathrm{u}}) = \Var(\RV^\high + c\,\RV^\low)$, a quadratic in $c$ minimized at ${c^* = -\Cov(\RV^\high, \RV^\low)/\Var(\RV^\low)}$ with the stated value.
\end{proof}

The between-state variance reflects which states the scarce \ac{hf} rollouts start from. A re-synchronized \ac{lf} baseline removes the between-state part of $\RV^\low$ from the \ac{lf} corrector, so the \ac{lf} correction acts only on the within-state variance and leaves $\E[\RV^\high \given C]$ unchanged, whereas the free \ac{lf} baseline keeps both parts, so that one coefficient can reduce both. The idealized case of perfect \ac{lf} statistics makes this concrete. If $\RV^\low = \RV^\high$ and $\mu^\low_{\mathrm{u}} = \E[\RV^\high]$, then $c = -1$ gives $Z_{\mathrm{u}} = \E[\RV^\high]$ with zero variance, whereas $Z_C = \E[\RV^\high \given C]$ retains the full between-state variance.

In our design, we leave the uncoupled \ac{lf} rollouts free to keep both variance components within reach of the control-variate estimator, at the price of a possible bias that vanishes when the coupled and uncoupled \ac{lf} statistics have the same mean. Reducing both variance components without bias would require an \ac{lf} baseline estimated from additional re-synchronization states. Such states come only from the \ac{hf} environment, precisely the data that are scarce.

\subsection{Budget Allocation}
\label{app:budget-allocation}

\paragraph{Sampling model and estimator.}
Fix the policy $\policyparams$ and consider an idealized sampling model. Each sample is a $T$-step rollout, represented by its mean surrogate $\RV_i^f := T^{-1}\sum_{t=1}^{T}\RV^f_{i,t}$. We draw $m^\high$ i.i.d.\ coupled pairs $(\hfc{\RV^\high_i}, \lfc{\RV^\low_i})$ with Pearson correlation $\rho$, $\rho^2 < 1$, and finite positive variances, together with $m^\low - m^\high$ additional independent \ac{lf} samples drawn from the same marginal distribution as the coupled \ac{lf} samples. Let $\hat{\mu}^\low_{\mathrm{all}} := \frac{1}{m^\low}\sum_{j=1}^{m^\low} \RV^\low_{j}$ be the baseline mean over all \ac{lf} samples. The estimator in \cref{eq:budget-alloc} is
\begin{equation}
L^{\mathrm{MF}}_{\mathrm{all}}(\policyparams) := \frac{1}{m^\high}\sum_{i=1}^{m^\high}\Big[\hfc{\RV^\high_i} + c\,\big(\lfc{\RV^\low_i} - \lfc{\hat{\mu}^\low_{\mathrm{all}}}\big)\Big].
\end{equation}

\paragraph{Variance and coefficient.}
For fixed $c$, accounting for the shared samples in the coupled \ac{lf} mean and the baseline gives
\begin{equation}
\begin{aligned}
\Var\big[L^{\mathrm{MF}}_{\mathrm{all}}\big]
&= \frac{\Var(\RV^\high)}{m^\high}
+ \left(\frac{1}{m^\high}-\frac{1}{m^\low}\right)
\Big[2c\,\Cov(\RV^\high,\RV^\low)+c^2\Var(\RV^\low)\Big].
\end{aligned}
\end{equation}
For $m^\low>m^\high$, minimizing this quadratic over $c$ at fixed counts gives
\begin{equation}
c^{*} = -\frac{\Cov(\RV^\high, \RV^\low)}{\Var(\RV^\low)},
\end{equation}
independent of $(m^\high, m^\low)$. Substituting $c^{*}$,
\begin{equation}\label{eq:budget-var-decomp}
\Var\big[L^{\mathrm{MF}}_{\mathrm{all}}\big] = \Var(\RV^\high)\left[\frac{1-\rho^2}{m^\high} + \frac{\rho^2}{m^\low}\right].
\end{equation}

\paragraph{Constrained allocation.}
For positive costs and budget, set $\nu=m^\low/m^\high\ge1$ and substitute $m^\high=p/(\samplecosth+\samplecostl\nu)$ into \cref{eq:budget-var-decomp} to obtain
\begin{equation}
\begin{aligned}
F(\nu)
&:= \frac{\Var(\RV^\high)}{p}(\samplecosth+\samplecostl\nu)
\left(1-\rho^2+\frac{\rho^2}{\nu}\right),\\
F'(\nu)
&= \frac{\Var(\RV^\high)}{p}
\left[\samplecostl(1-\rho^2)-\frac{\samplecosth\rho^2}{\nu^2}\right].
\end{aligned}
\end{equation}
Since $F'$ is nondecreasing, the minimizer on $\nu\ge1$ is $\nu^*=\max\{1,\tilde\nu\}$, where $\tilde{\nu} := \sqrt{\samplecosth\rho^2/(\samplecostl(1-\rho^2))}$, recovering the closed-form solution of \cref{sec:budget-allocation}. For $\nu^*>1$, this agrees with \citet[Thm.~3.4]{peherstorfer2016optimal}. At $\nu^*=1$, $m^\high=m^\low=p/(\samplecosth+\samplecostl)$, and the \ac{lf} correction vanishes for any $c$. Optimality is restricted to layouts retaining the paired statistics ($\nu \geq 1$), which keeps $\rho$ observable even when the control variate is disabled. Having $\nu^*>1$ alone does not guarantee lower variance than an equal-cost \ac{hf}-only allocation \citep[Corollary~3.5]{peherstorfer2016optimal}.

\paragraph{Integer counts and implementation.}
\Cref{eq:budget-alloc} is solved over continuous counts and the solution is floored to integers. Flooring preserves ordering and the budget cap but may leave budget unused or give $m^\high=0$. We clamp $m^\high\ge1$ and repair the counts to ensure $m^\low\ge m^\high\ge1$ and $\samplecosth m^\high+\samplecostl m^\low\le p$. Infeasible budgets are rejected.
While $m^\low<m^\high$, we increment $m^\low$ if the budget permits, otherwise decrement $m^\high$. While over budget, we decrement $m^\low$ if $m^\low>m^\high$, otherwise $m^\high$, always retaining $m^\high\ge1$.
Unused budget is not redistributed.
The resulting counts map onto the inputs of \cref{alg:mfpg} as $N^\high = m^\high$ and $N^\low = m^\low - m^\high$, with the coupled \ac{lf} samples included in $m^\low$.

\paragraph{Scope of the result.}
In practice, coupled \ac{lf} rollouts start from \ac{hf} states while uncoupled rollouts evolve freely, so their statistics need not share a marginal distribution. We also use per-transition correlation estimates, which omit cross-step covariance relevant to rollout-level allocation (\cref{lem:fit-granularity-vi}). The moments are estimated and smoothed online as the policy changes. The optimality statement therefore applies to the idealized scalar-estimator variance above, not to the deployed algorithm.

\subsection{Complete algorithm}
\label{sec:complete-algo}

\Cref{alg:mfpg} summarizes \ac{mfpg}-\ac{ppo}, with each iteration following the four steps of \cref{sec:mfpg-ppo}. Lines~\ref{line:collect} to~\ref{line:pairs} collect the multi-fidelity data, lines~\ref{line:adv} and~\ref{line:stat} construct the paired statistics, lines~\ref{line:fit} and~\ref{line:refit} fit the coefficient~$c$, and lines~\ref{line:fit} and~\ref{line:monitor} monitor the control variate. The policy is reparameterized as in \cref{sec:mfpg-ppo}. For a Gaussian policy with mean action $\bar{\action}_\policyparams(\state)$ and standard deviation $\sigma_\policyparams$, $g_\policyparams(\state, \varepsilon) = \bar{\action}_\policyparams(\state) + \sigma_\policyparams \odot \varepsilon$ with $\varepsilon \sim \mathcal{N}(0, I)$. \Cref{app:gating} specifies the monitoring procedures, and \cref{app:implementation-details} the implementation details. For each minibatch $\mathcal B$, $\mathcal B^\high$ indexes all
its \ac{hf} transitions, including those without a valid coupled
\ac{lf} counterpart.
$L^{\mathrm{MF}}(\policyparams)\big|_{\mathcal B}$ denotes
\cref{eq:mfpg-ppo} evaluated on that minibatch, with the
\ac{hf} term averaged only over transitions with valid
coupled \ac{lf} counterparts.

\begin{algorithm}[htbp]
    \caption{\ac{mfpg}-\ac{ppo}}
    \label{alg:mfpg}
    \begin{algorithmic}[1]
    \Require Rollout counts $N^\high, N^\low$; rollout length $T$; \ac{ppo} hyperparameters; iteration count~$\ell_{\max}$ 
    \State Initialize policy $\policy_\policyparams$, critics $V^\high, V^\low_{\mathrm{c}}, V^\low_{\mathrm{u}}$, and the monitoring state, cf.~\cref{app:gating}
    \For{iteration $\in \{1,2,\ldots,\ell_{\max}\}$}
      \State \textbf{Collect} $N^\high$ \ac{hf} rollouts with actions $\action^\high_t = g_\policyparams(\state^\high_t, \varepsilon_t)$, recording the start states, task variables, and noise $\varepsilon_t$ \label{line:collect}
      \State \textbf{Reset} each coupled \ac{lf} environment to its counterpart's recorded start state and task variables, then roll out $T$ steps with actions $\action^\low_t = g_\policyparams(\state^\low_t, \varepsilon_t)$ using the recorded noise $\varepsilon_t$, cf.~\cref{fig:sampling-pipeline}
      \State \textbf{Collect} $N^\low$ uncoupled $T$-step \ac{lf} rollouts with independently drawn noise and free resets
      \State \textbf{Form} the index sets $\mathcal{P}$ of coupled pairs and $\mathcal{U}$ of uncoupled \ac{lf} transitions \label{line:pairs}
      \State \textbf{Estimate advantages} for each batch with the corresponding critic over the \ac{gae} horizons of \cref{fig:sampling-pipeline} \label{line:adv}
      \State \textbf{Evaluate} the statistics $\RV^\high_{k,t}(\policyparams), \RV^\low_{k,t}(\policyparams), \hat{\mu}^\low(\policyparams)$ of \cref{eq:mfpg-ppo} for all three batches \label{line:stat}
      \State \textbf{Fit and smooth} $c$ on the pairs in $\mathcal{P}$, cf.~\cref{sec:fitting}; run the iteration-level check, cf.~\cref{app:gating} \label{line:fit}
      \For{each epoch and minibatch $\mathcal B$}
        \State \textbf{Re-evaluate} the statistics $\RV^\high_{k,t}(\policyparams), \RV^\low_{k,t}(\policyparams), \hat{\mu}^\low(\policyparams)$ of \cref{eq:mfpg-ppo} on $\mathcal B$ under the current policy; re-fit $c$ on the pairs in $\mathcal B$ \label{line:refit}
        \State \textbf{Monitor} with the minibatch checks of \cref{app:gating}; smooth $c$ if the evidence check passes, apply the control variate only if all checks pass, and set $c \gets 0$ otherwise \label{line:monitor}
        \State \textbf{Set} the policy objective
        $L_{\mathcal B}(\policyparams) \gets
        \begin{cases}
            L^{\mathrm{MF}}(\policyparams)\big|_{\mathcal B}, & c \ne 0, \\[3pt]
            \displaystyle\frac{1}{|\mathcal B^\high|}
            \sum_{(k,t)\in\mathcal B^\high}
            \RV^\high_{k,t}(\policyparams), & c = 0.
        \end{cases}$
        \State \textbf{Update} with a \ac{ppo} step on $L_{\mathcal B}(\policyparams)$; fit each critic on its corresponding transitions with targets that never span a re-synchronized state
      \EndFor
    \EndFor
    \end{algorithmic}
\end{algorithm}

\textbf{Budget-allocation variant.}
Before each iteration's data collection, we set $N^\high$
and $N^\low$ using the allocation and integer-rounding
procedure in \cref{app:budget-allocation}.
The objective $L^{\mathrm{MF}}_{\mathrm{all}}$ replaces
the baseline mean in $L^{\mathrm{MF}}$ with the mean over
all \ac{lf} transitions in each minibatch.
\subsection{Monitoring the Control Variate}\label{app:gating}

Throughout, $\hat c$ denotes the coefficient fitted on the current fitting data, $\bar c$ the running smoothed coefficient, and $c$ the coefficient applied in \cref{eq:mfpg-ppo}: $c=\bar c$ when all applicable checks pass and $c=0$ otherwise.

\textbf{Simulation.} We fit and monitor the control variate
at both batch and minibatch levels. At iteration start, we fit
$\hat c=-\widehat{\operatorname{Cov}}(\RV^\high,\RV^\low)/
\widehat{\operatorname{Var}}(\RV^\low)$
on all paired transitions in the batch, update
$\bar c\leftarrow(1-\alpha)\bar c+\alpha\hat c$,
and check variance reduction. If the check fails, we disable the control variate for the iteration.
For each minibatch of an eligible iteration, we re-fit
$\hat c$ from its paired transitions and obtain a bootstrap confidence
interval~\citep{efron1992bootstrap} by resampling those pairs. If the interval contains zero,
we disable the control variate for that minibatch. Otherwise, we update
$\bar c$ and apply the control variate using $\bar c$ only if the variance-reduction
check passes.

\textbf{Real robot.} With only 4 paired episodes per iteration, within-iteration bootstrap intervals can be unstable. Thus, we assess statistical evidence once per iteration, using a Fisher-$z$ heuristic to accumulate correlation evidence
across iterations. The transform approximately normalizes the sampling distribution
of correlations and stabilizes its variance. 
At iteration start, we average the Fisher-$z$~\citep{fisher1921probable} transforms
of the per-iteration correlations:
$\bar z = |\mathcal W|^{-1}\sum_{m\in\mathcal W}
\operatorname{atanh}\!\big(
\widehat{\operatorname{Corr}}_k(B^\high_{m,k},B^\low_{m,k})\big)$,
where $m$ indexes iterations, $k$ indexes episodes, and
$\mathcal W$ contains up to $M$ most recent iterations. We form the
approximate $(1-\delta)$ confidence interval
$\bar z \pm z_{1-\delta/2}/\sqrt{|\mathcal W|}$,
where $z_{1-\delta/2}$ is the standard-normal quantile.
If the interval contains zero, we disable the control variate
for the iteration. We update the control-variate coefficient at iteration start
and, if the interval excludes zero, before each minibatch update:
$\bar c=-\operatorname{EMA}_{\alpha}
[\widehat{\operatorname{Cov}}_k(B^\high_k,B^\low_k)]/
\operatorname{EMA}_{\alpha}
[\widehat{\operatorname{Var}}_k(B^\low_k)]$ under the current policy on \emph{all} paired episode sums from this iteration. Then, each minibatch applies the correction only if the variance check passes.

\textbf{Variance check and fallback.} We compare the sample variance of $\hfc{\RV^\high}+\bar c(\lfc{\RV^\low}-\lfc{\hat{\mu}^\low_{\mathcal B}})$ with that of $\hfc{\RV^\high}$, using individual paired steps in simulation and sums over paired episodes on the robot. A variance ratio greater than or equal to 1 sets $c=0$, yielding an \ac{hf}-only actor objective.

\begin{table}[htbp]
  \centering
  \caption{Control variate monitoring hyperparameters.}
  \label{tab:monitoring}
  \small
  \begin{tabular}{@{}lcc@{}}
      \toprule
      Hyperparameter & Simulation & Real robot \\
      \midrule
      EMA rate $\alpha$ & $0.002$ (FR); $0.025$ (BA) & $0.05$ \\
      Bootstrap resamples & $10^3$ & --- \\
      Evidence-window length $M$ & --- & $4$ \\
      Confidence level $1-\delta$ & $0.9$ & $0.9$ \\
      \bottomrule
  \end{tabular}
\end{table}

\subsection{Implementation and experiment details}
\label{app:implementation-details}

\subsubsection{Hyperparameter tuning protocol}\label{sec:hyperparam}

\textbf{Simulation tuning protocol.} All methods share the same tuning protocol. We tune on H1 Rough and ANYmal-D Rough with $N^\high=256$ and 3 seeds per task. Selected hyperparameters remain fixed across all locomotion tasks and \ac{hf} data budgets. \ac{mfpg}-\ac{ppo}-FR adds $N^\low=4096$ uncoupled \ac{lf} environments, and the other \ac{lf}-using baselines match its total \ac{lf} count. 

We select hyperparameters using both final and cumulative \ac{hf} training performance, measured by final return and \ac{auc}~\citep{osband2016deep,hessel2018rainbow}. Within each method, let \(M_{d,k}(q)\) denote metric \(k\in\{\mathrm{final},\mathrm{AUC}\}\), averaged over seeds, for configuration \(q\) on task \(d\). We standardize each metric across candidate hyperparameter configurations as \(z_{d,k}(q)=[M_{d,k}(q)-\mu_{d,k}]/s_{d,k}\), where \(\mu_{d,k}\) and \(s_{d,k}\) are the mean and sample standard deviation across candidates. We select the configuration maximizing \(S(q)=\sum_d\sum_k z_{d,k}(q)\).

We tune the \ac{ppo} backbone independently for
\ac{ppo} \ac{hf}-Only, \ac{mfpg}-\ac{ppo}-FR,
Co-training, DARC, and PAR, then tune method-specific
hyperparameters with the selected backbone fixed. For each \ac{ppo} backbone we tune, we first search learning rate/minibatch pairs at the default of 5 epochs, then search the number of epochs for the 4 best pairs. The \ac{ppo} grids are centered on the values of Isaac Lab's task-default RSL-RL configurations~\citep{mittal2025isaaclab,schwarke2025rsl}.
\ac{mfpg}-\ac{ppo}-BA inherits the selected
\ac{mfpg}-\ac{ppo}-FR backbone and tunes only its
\ac{ema} rate. 
For \ac{mfpg}-REINFORCE, we first search the learning rate on a grid centered on the default of the released implementation of \citet{liu2026a}, at its default discount factor of $0.97$ and \ac{ema} rate of $0.05$, then jointly search both for the selected learning rate.
For \ac{sac} \ac{hf}-Only, we first search pairs of learning rate and \ac{utd} ratio at the default target-entropy scale of $0.167$, then search this scale for the 4 best pairs. The number of minibatch updates per iteration sets the \ac{utd} ratio, and the target entropy equals the negative scale times the action dimension. \Citet{sabatini2026bridging} use $200$ minibatches per iteration with $8192$ parallel environments, which implies a \ac{utd} ratio of about $8.3$. We test \ac{utd} ratios of $\{8,16,32\}$ and this fixed $200$-minibatch setting.
\Cref{tab:simulation-hp-search} summarizes the grid values for the hyperparameter search.

\begin{table}[!htb]
    \centering
    \caption{Grid values for simulation hyperparameter search.}
    \label{tab:simulation-hp-search}
    \small
    \setlength{\tabcolsep}{4pt}
    \begin{tabular}{@{}
        >{\raggedright\arraybackslash}p{0.28\linewidth}
        >{\raggedright\arraybackslash}p{0.69\linewidth}@{}}
        \toprule
        Method & Candidate values \\
        \midrule
        \multicolumn{2}{@{}l}{\emph{Backbone parameters}} \\
        \addlinespace
        \ac{ppo}
        & Learning rate $\{5\times10^{-4},10^{-3},2\times10^{-3}\}$;
          epochs $\{3,5,8\}$;
          minibatches per epoch $\{2,4,8\}$ \\
        \addlinespace
        REINFORCE
        & Learning rate $\{7\times10^{-5},2\times10^{-4},
          7\times10^{-4},2\times10^{-3},7\times10^{-3}\}$;
          discount factor $\{0.95,0.97,0.99\}$ \\
        \addlinespace
        \ac{sac}
        & Actor and critic learning rate $\{10^{-4},2\times10^{-4},5\times10^{-4}\}$;
          \ac{utd} ratio $\{8,16,32\}$ or $200$ minibatches per iteration;
          target-entropy scale $\{0.05,0.167,0.5\}$ \\
        \midrule
        \multicolumn{2}{@{}l}{\emph{Method-specific parameters}} \\
        \addlinespace
        \ac{mfpg}-\ac{ppo}-FR
        & \ac{ema} rate $\{0.0005,0.002,0.01\}$ \\
        \addlinespace
        \ac{mfpg}-\ac{ppo}-BA
        & \ac{ema} rate $\{0.025,0.05,0.1\}$ \\
        \addlinespace
        DARC / PAR
        & Reward-correction strength $\{0.005,0.1,0.5,1,2,5\}$ \\
        \addlinespace
        \ac{mfpg}-REINFORCE
        & \ac{ema} rate $\{0.025,0.05,0.1\}$ \\
        \bottomrule
    \end{tabular}
\end{table}

\textbf{Simulation hyperparameters.}
Untuned \ac{ppo} backbone hyperparameters follow Isaac Lab's
task-default RSL-RL configurations~\citep{mittal2025isaaclab,schwarke2025rsl} for all \ac{ppo}-based methods. \ac{ppo} \ac{lf}-Only uses the default backbone because it operates on the default $4096$ environments. \ac{mfpg}-REINFORCE uses the same actor and value network architectures as the \ac{ppo}-based methods, and its other untuned hyperparameters, e.g., a value-loss coefficient and maximum gradient norm of $1$, follow the released implementation of \citet{liu2026a}. Untuned \ac{sac} hyperparameters follow the task-specific released configurations of \citet{sabatini2026bridging}.
\Cref{tab:simulation-hyperparameters} lists the adopted hyperparameters.

\begin{table}[!htb]
    \centering
    \caption{Simulation hyperparameters.}
    \label{tab:simulation-hyperparameters}
    \small
    \setlength{\tabcolsep}{4pt}

    \begin{tabular}{@{}
        >{\raggedright\arraybackslash}p{0.43\linewidth}
        >{\raggedright\arraybackslash}p{0.54\linewidth}@{}}
        \toprule
        \multicolumn{2}{@{}l}{\emph{\ac{ppo} backbone hyperparameters inherited from Isaac Lab's task-default RSL-RL configurations}} \\
        \addlinespace
        Parameter & Value \\
        \midrule
        Rollout length per environment & 24 steps \\
        Optimizer & Adam \\
        Discount factor / \ac{gae} parameter & $0.99$ / $0.95$ \\
        \ac{ppo} clipping parameter & $0.2$ \\
        Value-loss coefficient / value clipping & $1$ / enabled \\
        Entropy coefficient & H1: $0.01$; ANYmal-D: $0.005$ \\
        Learning-rate schedule & KL-adaptive; target KL divergence $0.01$ \\
        Maximum gradient norm & $1$ \\
        Actor and critic hidden-layer sizes
        & Rough: $(512,256,128)$; flat: $(128,128,128)$ \\
        Activation function & ELU \\
        Observation normalization & Disabled \\
        Initial policy standard deviation & $1$ \\
        \bottomrule
    \end{tabular}

    \par\medskip

    \begin{tabular}{@{}
        >{\raggedright\arraybackslash}p{0.43\linewidth}
        >{\raggedright\arraybackslash}p{0.54\linewidth}@{}}
        \toprule
        \multicolumn{2}{@{}l}{\emph{\ac{sac} hyperparameters inherited from the released configurations of \citet{sabatini2026bridging}}} \\
        \addlinespace
        Parameter & Value \\
        \midrule
        Rollout length per environment & 24 steps \\
        Optimizer & Adam \\
        Discount factor & Rough: $0.97$; flat: $0.96$ \\
        Multi-step return horizon & $5$ steps \\
        Replay buffer size & $5\times10^{6}$ transitions \\
        Minibatch size & $8192$ transitions \\
        Temperature & Initial $0.001$; automatically tuned with learning rate $2\times10^{-5}$ \\
        Target smoothing coefficient & $0.003$ \\
        Maximum gradient norm & $1$ \\
        Actor and critic hidden-layer sizes & Rough: $(1024,512,256)$; flat: $(512,256,128)$ \\
        Activation function & SiLU \\
        Observation normalization & Enabled \\
        Initial policy standard deviation & $0.15$ \\
        \bottomrule
    \end{tabular}

    \par\medskip

    \begin{tabular}{@{}
        >{\raggedright\arraybackslash}p{0.23\linewidth}
        >{\centering\arraybackslash}p{0.14\linewidth}
        >{\centering\arraybackslash}p{0.08\linewidth}
        >{\centering\arraybackslash}p{0.13\linewidth}
        >{\raggedright\arraybackslash}p{0.33\linewidth}@{}}
        \toprule
        \multicolumn{5}{@{}l}{\emph{Method configurations}} \\
        \addlinespace
        Method & Learning rate & Epochs & Minibatches per epoch
        & Additional hyperparameters \\
        \midrule
        \ac{ppo} \ac{hf}-Only
        & $2\times10^{-3}$ & 5 & 8 & --- \\
        \addlinespace
        \ac{mfpg}-\ac{ppo}-FR
        & $5\times10^{-4}$ & 8 & 8
        & \ac{ema} rate $0.002$ \\
        \addlinespace
        \ac{mfpg}-\ac{ppo}-BA
        & $5\times10^{-4}$ & 8 & 8
        & \ac{ema} rate $0.025$ \\
        \addlinespace
        Co-training
        & $5\times10^{-4}$ & 8 & 8
        & --- \\
        \addlinespace
        DARC
        & $2\times10^{-3}$ & 3 & 8
        & Reward-correction strength $0.005$ \\
        \addlinespace
        PAR
        & $2\times10^{-3}$ & 8 & 8
        & Reward-correction strength $0.005$; equal \ac{hf}/\ac{lf} loss weights \\
        \addlinespace
        \ac{mfpg}-REINFORCE
        & $7\times10^{-5}$ & --- & ---
        & Discount factor $0.97$; \ac{ema} rate $0.025$ \\
        \addlinespace
        \ac{ppo} \ac{lf}-Only 
        & $10^{-3}$ & 5 & 4 & --- \\
        \addlinespace
        \ac{sac} \ac{hf}-Only
        & $5\times10^{-4}$ & 1 & $\frac{32\times24N^\high}{8192}$
        & \ac{utd} ratio $32$; target-entropy scale $0.5$ \\
        \bottomrule
    \end{tabular}
\end{table}

\textbf{Real-robot hyperparameters.} In real-robot experiments, we first tune the shared \ac{ppo} backbone (adopted from  ManiSkill's~\citep{tao2024maniskill3} PickCube \ac{ppo} configuration) over learning rates $\{1\times10^{-5}, 4\times10^{-5}, 8\times10^{-5}, 1.5\times10^{-4}, 3\times10^{-4}\}$ and minibatch counts per epoch $\{4, 6, 8, 32\}$. We use the selected backbone for both \ac{mfpg}-\ac{ppo} and \ac{ppo} \ac{hf}-Only. On top of it, we tune the \ac{mfpg}-specific \ac{ema} rate $\{0.05, 0.1\}$ and \ac{lf} rollout segment length $\{10, 20, 60\}$ steps. We additionally adopt a learning-rate schedule for both methods. The learning rate steps down from $4\times10^{-5}$ to $2.5\times10^{-5}$ and then to $1.5\times10^{-5}$ when the mean training success rate over the latest $10$ updates reaches $0.8$ and $0.925$. \Cref{tab:robot-hp-settings} lists the adopted hyperparameters.

\begin{table}[!htb]
    \centering
    \caption{Real-robot hyperparameters.}
    \label{tab:robot-hp-settings}
    \small
    \setlength{\tabcolsep}{4pt}
    \begin{tabular}{@{}
        >{\raggedright\arraybackslash}p{0.50\linewidth}
        >{\raggedright\arraybackslash}p{0.47\linewidth}@{}}
        \toprule
        Parameter & Value \\
        \midrule
        \multicolumn{2}{@{}l}{\emph{Shared}} \\
        \addlinespace
        Initial learning rate & $4\times10^{-5}$ \\
        Epochs & 4 \\
        Minibatches per epoch & 4 \\
        \ac{hf} episodes per update & 4 \\
        Optimizer & Adam \\
        Discount factor / \ac{gae} parameter & $0.8$ / $0.95$ \\
        \ac{ppo} clipping parameter & $0.2$ \\
        Value-loss coefficient / value clipping & $0.5$ / disabled \\
        Entropy coefficient & $0$ \\
        Maximum gradient norm & $0.5$ \\
        Actor and critic hidden-layer sizes & $(256,256)$ \\
        Activation function & ELU \\
        Observation normalization & Disabled \\
        \midrule
        \multicolumn{2}{@{}l}{\emph{\ac{mfpg}-\ac{ppo}-specific}} \\
        \addlinespace
        \ac{ema} rate & $0.05$ \\
        Coupled \ac{lf} rollout-segment length & 20 steps \\
        Uncoupled \ac{lf} environments & 8192 \\
        \bottomrule
    \end{tabular}
\end{table}

\subsubsection{Implementation details}
\label{app:method-implementation}

\textbf{Two-process implementation.} To support different simulator fidelities or a physical robot as the \ac{hf} environment, we implement \ac{mfpg}-\ac{ppo} with two processes. An \emph{\ac{hf} worker} collects \ac{hf} rollouts, and a \emph{learner} collects coupled and uncoupled \ac{lf} rollouts and optimizes the policy and critics. The learner sends the policy parameters and action noise to the worker, then uses the returned \ac{hf} states to initialize coupled \ac{lf} rollouts before collecting \ac{lf} data. For locomotion, both processes use Isaac Lab~\citep{mittal2025isaaclab}, and for Franka robot learning, the worker controls the robot and the learner runs ManiSkill~\citep{tao2024maniskill3}. In locomotion, coupled \ac{lf} environments match \ac{hf} velocity-tracking commands, and coupled and uncoupled \ac{lf} environments inherit \ac{hf} terrain assignments. In manipulation, coupled \ac{lf} environments match the measured \ac{hf} object pose.

\textbf{Co-training.} We adapt Co-training method~\citep{wei2025empirical} to on-policy \ac{rl} setting and train a shared policy and critic on independently collected \ac{hf} and \ac{lf} rollouts. Let $\mathcal L_{\mathrm{PPO}}^f$ be the mean loss in fidelity $f$, including policy, value, and entropy terms, and $n^f$ its transition count.
Co-training minimizes
\begin{equation*}
    \mathcal L_{\mathrm{co}}
    = \frac{n^\high\mathcal L_{\mathrm{PPO}}^\high
            +n^\low\mathcal L_{\mathrm{PPO}}^\low}
           {n^\high+n^\low},
\end{equation*}
giving each transition equal weight.

\textbf{DARC.}
Domain Adaptation with Rewards from Classifiers (DARC)~\citep{eysenbach2021offdynamics} corrects \ac{lf} rewards using a classifier-based estimate of the dynamics mismatch.
We adapt this correction to \ac{ppo}.
Two binary domain classifiers predict the fidelity label $f$: $D_{sa}(f\mid s,a)$ from state-action pairs and $D_{sas}(f\mid s,a,s')$ from transitions.
Together, they estimate the relative likelihood of observing $s'$ under \ac{hf} versus \ac{lf} dynamics, given $(s,a)$.
The resulting reward correction is
\begin{equation*}
    \widetilde r^\low
    = r^\low+\eta_{\mathrm{DARC}}
    \left[
        \log\frac{D_{sas}(\high\mid s,a,s')}{D_{sas}(\low\mid s,a,s')}
        -\log\frac{D_{sa}(\high\mid s,a)}{D_{sa}(\low\mid s,a)}
    \right],
\end{equation*}
where $\eta_{\mathrm{DARC}}$ controls the correction strength. 
We follow \citet{eysenbach2021offdynamics} and the authors'
released code\footnote{\url{https://github.com/google-research/google-research/tree/master/darc}},
using the classifier architecture and batch size specified in \citet[Appendix~E]{eysenbach2021offdynamics}. The policy and critic train only on \ac{lf} rollouts with corrected rewards. Both fidelities train the classifiers. We use two networks with hidden layers $(256,256)$ and ReLU activations, add the state-action network's logits to the transition network's logits, and apply Gaussian input noise with standard deviation $1$ only during classifier training.
At each classifier optimization step, we sample $128$
transitions from each fidelity's replay buffer, forming minibatches
$\mathcal B^\high$ and $\mathcal B^\low$, and minimize
\begin{equation*}
    \mathcal L_{\mathrm{cls}}
    = -\frac{1}{256}
    \sum_{f\in\{\high,\low\}}
    \sum_{(s,a,s')\in\mathcal B^f}
    \left[
        \log D_{sa}(f\mid s,a)
        + \log D_{sas}(f\mid s,a,s')
    \right].
\end{equation*}
Classifier training starts in the first iteration. In our implementation, reward correction begins after the first $10\%$ of training iterations.

\textbf{PAR.}
Policy Adaptation by Representation Mismatch (PAR)~\citep{lyu2024cross}
penalizes \ac{lf} transitions using the mismatch between the embedding
of the observed next state and the embedding predicted by a
representation model trained on \ac{hf} data.
We follow~\citet{lyu2024cross} and the authors' released
code\footnote{\url{https://github.com/dmksjfl/PAR}},
adapting the policy and critic updates to \ac{ppo}.
Let $f_{\bar\psi}$ and $g_{\bar\psi}$ be target copies of a state
encoder and a next-state embedding predictor, respectively,
with parameters $\bar\psi$ maintained by an \ac{ema} of the online parameters $\psi$.
For embedding dimension $d_z$ and correction strength
$\eta_{\mathrm{PAR}}$, the corrected \ac{lf} reward is
\begin{equation*}
    \widetilde r^\low
    = r^\low-\frac{\eta_{\mathrm{PAR}}}{d_z}
      \left\|g_{\bar\psi}\!\left(f_{\bar\psi}(s),a\right)
                   -f_{\bar\psi}(s')\right\|_2^2.
\end{equation*}
We train a shared policy and critic on \ac{hf} rollouts with
their original rewards and \ac{lf} rollouts with corrected rewards.
Let $\mathcal L_{\mathrm{PPO}}^f$ denote the mean \ac{ppo} loss
over transitions from fidelity $f$, including policy, value,
and entropy terms.
Following PAR's equal weighting of the two domains, we minimize
\begin{equation*}
    \mathcal L_{\mathrm{PAR}}
    = \tfrac12\mathcal L_{\mathrm{PPO}}^\high
      +\tfrac12\mathcal L_{\mathrm{PPO}}^\low.
\end{equation*}
The encoder and predictor each have
hidden layers $(256,256)$ with ELU activations, and $d_z=256$.
We normalize each state embedding by dividing it by the mean
absolute value of its components.
On \ac{hf} transitions $(s,a,s')$, we train the encoder and
predictor to predict the next-state embedding from the current
state and action, minimizing
$\|g_\psi(f_\psi(s),a)-\operatorname{sg}(f_\psi(s'))\|_2^2/d_z$,
where $\operatorname{sg}$ treats its argument as fixed during differentiation.
In our implementation, after completing all \ac{ppo} epochs, we perform the same number
of gradient updates to the encoder and predictor as to the policy.
Each update uses $128$ transitions sampled from the \ac{hf}
replay buffer.
After each gradient update, we update the target parameters as
$\bar\psi \gets 0.995\,\bar\psi + 0.005\,\psi$. Reward correction is active from the first iteration.

\textbf{Shared reward correction details.}
DARC and PAR correct rewards before \ac{gae} computation. 
Auxiliary networks for reward correction remain fixed during policy optimization, followed by as many auxiliary gradient steps as policy update steps.
Both DARC and PAR use Adam with learning rate $3\times10^{-4}$ for auxiliary networks and replay buffer size $10^6$ per stored fidelity: both fidelities for DARC, only \ac{hf} for PAR.
\Cref{tab:simulation-hyperparameters} gives the selected correction strengths.

\textbf{\ac{mfpg}-REINFORCE.} We follow \citet{liu2026a} and the authors' released implementation\footnote{\url{https://github.com/CLeARoboticsLab/MultiFidelityPolicyGradients}}. Each iteration, \ac{mfpg}-REINFORCE optimizes the objective with the episode-level statistic $\RV_\trajectory(\policyparams)$ defined in \cref{sec:preliminaries}
\begin{equation*}
    \frac{1}{N^\high}
    \sum_{k=1}^{N^\high}
    \Big[\RV_{\trajectory^\high_k}(\policyparams)
    +c\,\big(\RV_{\trajectory^\low_k}(\policyparams)
             -\hat{\mu}^\low_{\mathrm{all}}(\policyparams)\big)\Big].
\end{equation*}
Here, $k$ indexes coupled rollout pairs, and $\hat{\mu}^\low_{\mathrm{all}}$ averages the statistics over all coupled and uncoupled \ac{lf} rollouts. Each iteration, \ac{mfpg}-REINFORCE collects parallel $100$-step multi-fidelity rollouts.
Following the released implementation of \citet{liu2026a},
it approximates full-episode Monte Carlo returns using rewards up to
episode termination or the rollout boundary, without value bootstrapping. A shared value function $V^\high$, fitted to returns
on paired \ac{hf} transitions, supplies the baseline for both
fidelities. After each collection,
\ac{mfpg}-REINFORCE fits $c$ per step as described in
\cref{sec:fitting}, smoothing the correlation and standard
deviations separately with an \ac{ema}.
It sets $c=0$ when the current, unsmoothed
correlation is negative.
For the policy update, it differentiates all three terms, holding
the returns, value baselines, and $c$ fixed.
Each collection produces one full-batch policy and value update.

\textbf{\ac{sac} \ac{hf}-Only.} We use the \ac{sac} implementation and Isaac Lab configurations released by \citet{sabatini2026bridging}\footnote{\url{https://github.com/leggedrobotics/rsl_rl_sac} and \url{https://github.com/sabagian/isaaclab-sac}}. At the start of \ac{hf} training, we restore the actor, critics, and observation-normalization statistics from the task's \ac{lf}-pretrained checkpoint described in \cref{app:experimental-protocol}, and reinitialize the optimizers and temperature. The replay buffer starts empty and stores only \ac{hf} transitions. Each iteration collects a 24-step rollout from each of the $N^\high$ \ac{hf} environments. The first $8192/N^\high$ iterations collect a total of $24\times8192$ transitions without gradient updates, matching the released configuration's single collection-only iteration with $8192$ environments. These iterations count toward the \ac{hf} sampling budget. Subsequent iterations perform the minibatch updates in \cref{tab:simulation-hyperparameters}.

\textbf{Sampling budgets.}
\Cref{tab:simulation-sampling-counts} summarizes per-iteration sampling budgets for all the methods. For \ac{mfpg}-\ac{ppo}-BA, we assume costs $\samplecosth=10$ and $\samplecostl=1$ per rollout and set $p=10N^\high_{\mathrm{ref}}+4096$, where $N^\high_{\mathrm{ref}}$ is the nominal \ac{hf} environment count of the \ac{mfpg}-\ac{ppo}-FR with the corresponding total cost. 

\begin{table}[!htb]
    \centering
    \caption{Simulation rollouts per iteration. Rollouts contain 24 steps, except for \ac{mfpg}-REINFORCE (100 steps).}
    \label{tab:simulation-sampling-counts}
    \small
    \setlength{\tabcolsep}{5pt}
    \begin{tabular}{@{}lccc@{}}
        \toprule
        Method & \ac{hf} & Coupled \ac{lf} & Uncoupled \ac{lf} \\
        \midrule
        \ac{ppo} \ac{hf}-Only & $N^\high$ & --- & --- \\
        \ac{ppo} \ac{lf}-Only & --- & --- & $4096$ \\
        \ac{mfpg}-\ac{ppo}-FR & $N^\high$ & $N^\high$ & $4096$ \\
        \ac{mfpg}-\ac{ppo}-BA & $m^\high$ & $m^\high$ & $m^\low-m^\high$ \\
        Co-training, DARC, PAR & $N^\high$ & --- & $4096+N^\high$ \\
        \ac{mfpg}-REINFORCE & $N^\high$ & $N^\high$ & $4096$ \\
        \ac{sac} \ac{hf}-Only & $N^\high$ & --- & --- \\
        \bottomrule
    \end{tabular}
\end{table}

\subsubsection{Experiment details}
\label{app:experimental-protocol}

\textbf{Simulation locomotion tasks and fidelities.}
We use Isaac Lab's Unitree H1 and ANYmal-D velocity-tracking tasks on flat and rough terrain\footnote{The base task IDs are \path{Isaac-Velocity-Flat-H1-v0}, \path{Isaac-Velocity-Rough-H1-v0}, \path{Isaac-Velocity-Flat-Anymal-D-v0}, and \path{Isaac-Velocity-Rough-Anymal-D-v0}.}, with task-default observations, rewards, termination rules, and action space.
Actions specify joint-position targets.
Observations comprise base linear and angular velocities, projected gravity, velocity commands, joint positions relative to the default pose and joint velocities, and the previous action. Rough-terrain tasks additionally include a height scan.
We disable injected observation noise.
Both fidelities act every $0.02$\,s, with the physics settings in \cref{tab:locomotion-fidelity-details}.
The \ac{lf} assets use simplified collision geometry visualized in \cref{fig:locomotion-collision-models}, and ANYmal-D \ac{lf} additionally approximates cylinder and cone collisions.
The cross-task comparison uses $N^\high\in\{64,128,256,512\}$. The H1 Rough scaling study additionally includes $N^\high\in\{8,16,32\}$.

\begin{figure}[!htb]
    \centering
    \begin{minipage}{0.85\linewidth}
        \begin{minipage}[t]{0.47\linewidth}
            \centering
            {\small\textbf{(a) Unitree H1}}\par\smallskip
            {\small\makebox[0.48\linewidth][c]{Full (\acs{hf})}\hfill\makebox[0.48\linewidth][c]{Simplified (\acs{lf})}}\par\smallskip
            \includegraphics[width=\linewidth,trim=0bp 8bp 398bp 189bp,clip]{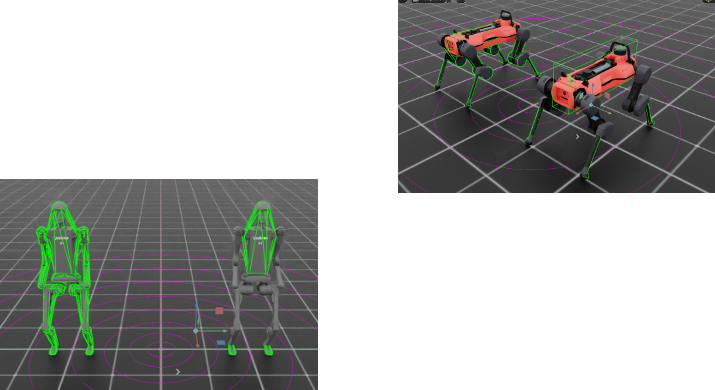}
        \end{minipage}\hfill
        \begin{minipage}[t]{0.47\linewidth}
            \centering
            {\small\textbf{(b) ANYmal-D}}\par\smallskip
            {\small\makebox[0.48\linewidth][c]{Full (\acs{hf})}\hfill\makebox[0.48\linewidth][c]{Simplified (\acs{lf})}}\par\smallskip
            \includegraphics[width=\linewidth,trim=398bp 197bp 0bp 0bp,clip]{Images/fidelity_setups/locomotion_collision_models.png}
        \end{minipage}
    \end{minipage}
    \caption{\textbf{Locomotion collision models.} \textbf{(a)} Unitree H1 and \textbf{(b)} ANYmal-D. Each panel compares the full \ac{hf} collision model (left) with the simplified \ac{lf} collision model (right). Green outlines indicate the collision geometry.}
    \label{fig:locomotion-collision-models}
\end{figure}

\begin{table}[!htb]
    \centering
    \caption{Locomotion fidelity settings, shared by the flat and rough tasks of each robot.}
    \label{tab:locomotion-fidelity-details}
    \small
    \setlength{\tabcolsep}{5pt}
    \begin{tabular}{@{}lcccc@{}}
        \toprule
        & \multicolumn{2}{c}{H1} & \multicolumn{2}{c}{ANYmal-D} \\
        \cmidrule(lr){2-3}\cmidrule(l){4-5}
        Parameter & \ac{hf} & \ac{lf} & \ac{hf} & \ac{lf} \\
        \midrule
        Physics time step (s) & $0.002$ & $0.01$ & $0.002$ & $0.0025$ \\
        Physics steps per action & 10 & 2 & 10 & 8 \\
        Position solver iterations & 4 & 2 & 4 & 2 \\
        Velocity solver iterations & 4 & 0 & 0 & 0 \\
        Self-collisions & Enabled & Disabled & Enabled & Disabled \\
        Collision geometry & Full & Simplified & Full & Simplified \\
        \bottomrule
    \end{tabular}
\end{table}

\textbf{Initialization and training length.}
For each task, all on-policy methods that use \ac{hf} data initialize from the same policy checkpoint, pre-trained with \ac{ppo} in the \ac{lf} environment for 500 iterations on H1 Rough and 100 on the other tasks.
\ac{ppo} pretraining uses Isaac Lab's task-default RSL-RL hyperparameters.
Since \ac{sac} uses different actor and critic network architectures, cf. \cref{tab:simulation-hyperparameters}, \ac{sac} \ac{hf}-Only instead initializes from its own task-specific checkpoint, pre-trained with \ac{sac} in $4096$ \ac{lf} environments using the released configurations of \citet{sabatini2026bridging}.
Among checkpoints saved every 25 iterations, we select the earliest whose \ac{lf} training return reaches that of the corresponding \ac{ppo} \ac{lf}-pretrained counterpart.
The selected checkpoints are at iterations 150, 125, 200, and 225 on H1 Rough, H1 Flat, ANYmal-D Rough, and ANYmal-D Flat, respectively.
We train for 8000 iterations on H1 Rough and 4000 on the other tasks, excluding pretraining.
\ac{mfpg}-REINFORCE uses $100$-step rollouts and trains for 1920 and 960 iterations, respectively, matching the total \ac{hf} transition counts of \ac{ppo} at the same $N^\high$.

\textbf{Simulation evaluation.}
We use Isaac Lab's \texttt{Play} evaluation tasks\footnote{The task IDs are \begingroup\def\UrlBreaks{\do\-}\path{Isaac-Velocity-Flat-H1-Play-v0}, \path{Isaac-Velocity-Rough-H1-Play-v0}, \path{Isaac-Velocity-Flat-Anymal-D-Play-v0}, and \path{Isaac-Velocity-Rough-Anymal-D-Play-v0}\endgroup.}, replacing their physics settings with the \ac{hf} configuration in \cref{tab:locomotion-fidelity-details}.
We evaluate each checkpoint deterministically over 2048 completed episodes and compute its mean undiscounted episode return.
We report means across seeds with $95\%$ bootstrapped confidence intervals.
For return differences from \ac{ppo} \ac{hf}-Only, we resample the two seed sets independently.

\textbf{Franka task.}
\Cref{fig:franka-fidelity-setup} shows the physical Franka robot \ac{hf} environment and its ManiSkill \ac{lf} simulation counterpart.
In the \ac{hf} environment, an Intel RealSense D435 camera estimates the cube's position and orientation from AprilTags attached to its six faces.
The task is to grasp a cube with edge length $3.8$\,cm and lift its center to the fixed goal $(x,y,z)=(0.45,0,0.2715)$\,m in the robot base frame.
The tabletop is at $z=0.0545$\,m, placing the goal $0.198$\,m above the cube center when the cube is on the table.
Initial cube positions lie within $x\in[0.35,0.50]$\,m and $y\in[-0.075,0.075]$\,m.
Simulation samples these coordinates uniformly, with yaw sampled uniformly from $[-\pi/4,\pi/4]$.
The physical cube may rest on any face and at any yaw. At reset, we relabel its axes using cube symmetry to express its initial orientation in the same convention.
Between physical episodes, a scripted routine repositions the cube within this region if it remains grasped. Otherwise, a human operator randomly repositions it.
Each episode comprises $H=60$ actions at $5$\,Hz and continues to the horizon after success.

The task is harder in the \ac{hf} environment because the side-mounted camera can lose sight of the cube during contact. Across both methods, approximately $69\%$ of training control steps lack a valid cube pose estimate. When perception fails, we retain the last valid pose, updating its position from the end effector if a grasp is detected. Grasp detection relies on gripper width and may be delayed. By contrast, the \ac{lf} simulator provides ground-truth object and grasp states. Friction and controller stiffness also differ between fidelities.

\begin{figure}[!htb]
    \centering
    \begin{minipage}{0.8\linewidth}
        \centering
        {\small\makebox[0.5\linewidth][c]{\textbf{Physical Franka (\acs{hf})}}\makebox[0.5\linewidth][c]{\textbf{ManiSkill (\acs{lf})}}}\par\smallskip
        \includegraphics[width=\linewidth]{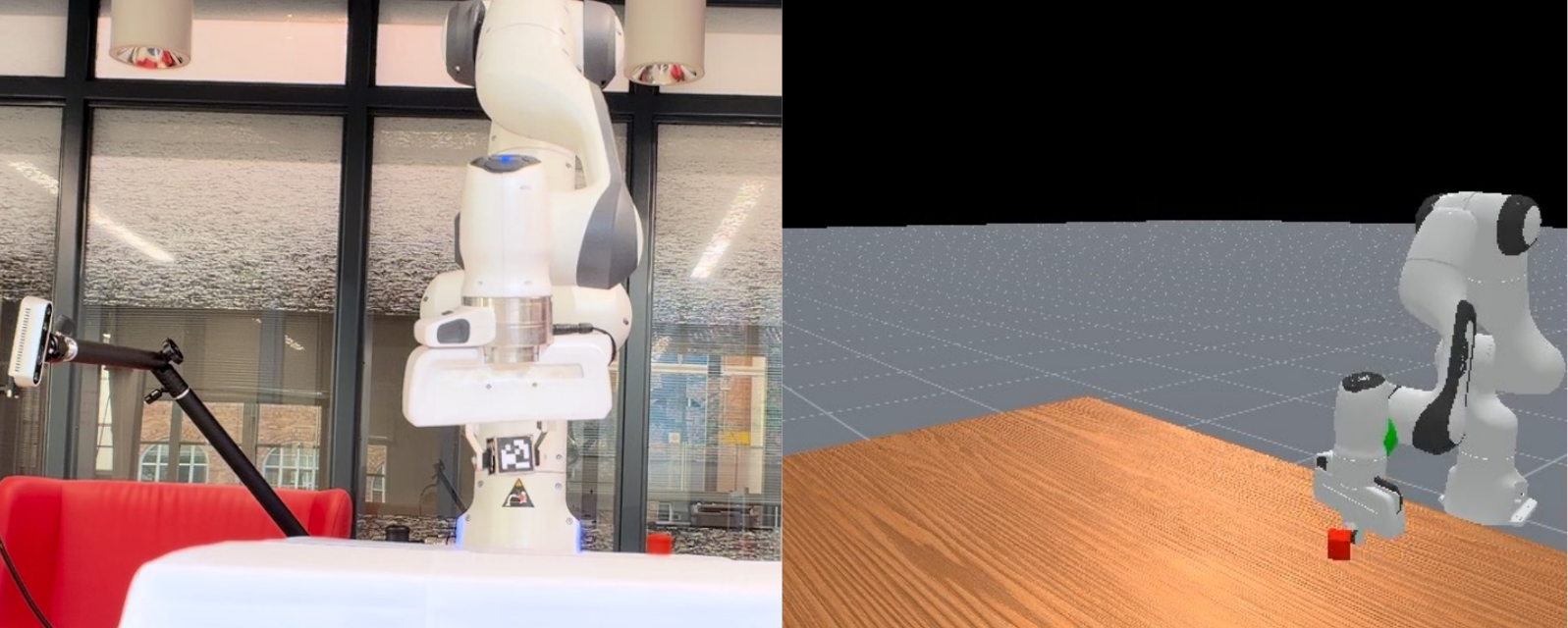}
    \end{minipage}
    \caption{\textbf{Franka cube-picking environments.} The physical Franka setup (left) provides the \ac{hf} environment, and ManiSkill simulation (right) provides the \ac{lf} environment.}
    \label{fig:franka-fidelity-setup}
\end{figure}

\textbf{Robot observations and actions.}
The policy receives the 42-dimensional state observation in \cref{tab:franka-state-observations}.
Positions are expressed in the robot base frame.
The five action components command end-effector translations, yaw rotation, and gripper opening or closing.
The physical Franka arm uses Cartesian impedance control.
The \ac{lf} simulator uses ManiSkill's default physics settings and its \texttt{pd\_ee\_delta\_pose} controller, which converts end-effector pose targets to joint targets through inverse kinematics.
Joint PD control tracks these targets, with arm-joint stiffness and damping set to $300$ and $200$, respectively.

\begin{table}[!htb]
    \centering
    \caption{State observations for the Franka cube-picking task. Here, $\varphi$ is cube yaw and $t$ counts elapsed control steps.}
    \label{tab:franka-state-observations}
    \small
    \begin{tabular}{@{}lc@{}}
        \toprule
        Observation & Dimensions \\
        \midrule
        Arm joint positions and velocities & $7+7$ \\
        Gripper width & 1 \\
        End-effector position and quaternion & $3+4$ \\
        Cube position and quaternion & $3+4$ \\
        Goal position & 3 \\
        End-effector-to-cube and cube-to-goal vectors & $3+3$ \\
        Grasp indicator & 1 \\
        Cube-yaw features $\sin(4\varphi),\cos(4\varphi)$ & 2 \\
        Episode phase $t/H$ & 1 \\
        \bottomrule
    \end{tabular}
\end{table}

\begin{samepage}
\textbf{Robot reward and success.}
Let $d_{\mathrm{reach}}$ and $d_{\mathrm{goal}}$ be the distances from the cube center to the end effector and goal, respectively, $b_{\mathrm{grasp}}\in\{0,1\}$ the grasp indicator, and $b_{\mathrm{place}}=\mathbf{1}\{d_{\mathrm{goal}}\le0.05\,\mathrm{m}\}$.
With arm joint velocities $\dot{\mathbf q}$, both fidelities use
\begin{equation*}
    \begin{aligned}
        r={}&1-\tanh(5d_{\mathrm{reach}})
        +b_{\mathrm{grasp}}\big[2-\tanh(5d_{\mathrm{goal}})\big]\\
        &+b_{\mathrm{place}}\big[1-\tanh(5\|\dot{\mathbf q}\|_2)\big].
    \end{aligned}
\end{equation*}
Success requires $d_{\mathrm{goal}}\le0.05$\,m and $\|\dot{\mathbf q}\|_\infty\le0.5$\,rad/s, and replaces the reward with 5.
An episode succeeds if this condition holds at least once.
\end{samepage}

\textbf{Other training and evaluation details.}
Both methods start from the same simulation-trained policy and collect 4 physical episodes per update.
\ac{mfpg}-\ac{ppo} additionally collects $T=20$-step uncoupled \ac{lf} simulation rollouts from $8192$ independent parallel environments as well as coupled \ac{lf} rollouts. The uncoupled \ac{lf} environments are staggered across episode phases~\citep{bharthulwar2026staggered}. We train both methods for 80 iterations per seed, using 3 seeds per method. Each trained policy is evaluated without exploration noise for 20 physical episodes.
We report means and ranges across seeds. The simulation-only reference checkpoints are each evaluated for 40 physical episodes.

\subsection{Extended Experimental Results}\label{app:more-results}

\Cref{fig:baseline-absolute-perform} shows the absolute mean \ac{hf} evaluation returns for the same settings as \cref{fig:baseline-delta}, which reports return differences relative to \ac{ppo} \ac{hf}-Only. White bars denote \ac{ppo} \ac{hf}-Only.

\begin{figure}[htbp]
    \centering
    \includegraphics[width=0.96\linewidth]{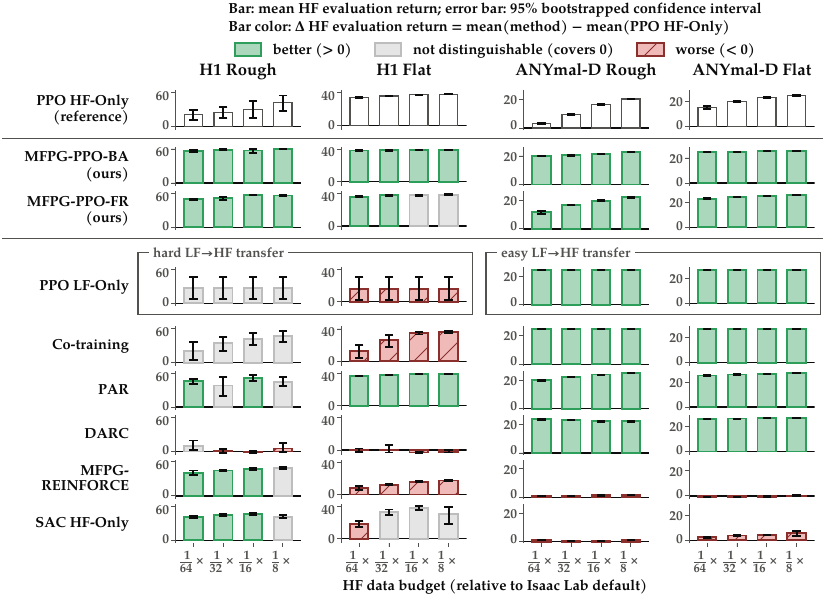}
    \caption{Mean final \ac{hf} evaluation returns for the task-budget settings in \cref{fig:baseline-delta}. Error bars denote $95\%$ bootstrap confidence intervals across 6 seeds.}
    \label{fig:baseline-absolute-perform}
\end{figure}

In \cref{fig:scaling-v3}, both \ac{mfpg}-\ac{ppo} and \ac{ppo} \ac{hf}-Only start from a common policy pretrained briefly for $500$ iterations in the \ac{lf} environment on H1 Rough, cf. \cref{app:experimental-protocol}. \Cref{fig:fine-tine-comparison} repeats this comparison at the 5 smallest \ac{hf} budgets after more extensive \ac{lf} pretraining. For each seed, both methods start from that seed's final \ac{ppo} \ac{lf}-Only checkpoint, trained for $8000$ iterations, and fine-tune for $8000$ additional iterations at matched \ac{hf} budgets using the hyperparameters in \cref{tab:simulation-hyperparameters}. \ac{mfpg}-\ac{ppo} additionally uses \ac{lf} data during fine-tuning. The gray \ac{ppo} \ac{lf}-Only bar shows the starting checkpoints' mean \ac{hf} evaluation return. \ac{mfpg}-\ac{ppo} achieves higher mean return than \ac{ppo} \ac{hf}-Only at all 5 budgets, with performance increasing before plateauing. In contrast, \ac{ppo} \ac{hf}-Only varies substantially across budgets and collapses to near-zero mean return at the 2 smallest budgets.

\begin{figure}[htbp]
    \centering
    \includegraphics[width=0.9\linewidth]{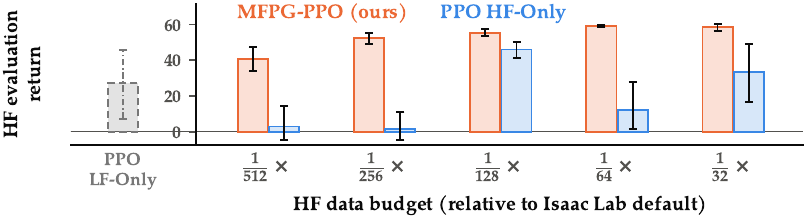}
    \caption{
Fine-tuning after extensive \ac{lf} pretraining on H1 Rough. Both methods start from each seed's final \ac{ppo} \ac{lf}-Only checkpoint, whose mean return the gray bar shows. \ac{hf} budgets are expressed as fractions of 4096 parallel environments. Bars show \ac{hf} evaluation returns over 2048 episodes per seed, averaged across 6 seeds. Whiskers show $95\%$ seed-bootstrapped confidence intervals for the means.
    }
    \label{fig:fine-tine-comparison}
\end{figure}

\Cref{fig:franka_applied_cv_percentage} shows how often MFPG-PPO-FR applies the control variate during the Franka training runs in \cref{fig:franka-main}, following the monitoring checks in \cref{app:gating}.

\begin{figure}[htbp]
    \centering
    \includegraphics[width=0.5\linewidth]{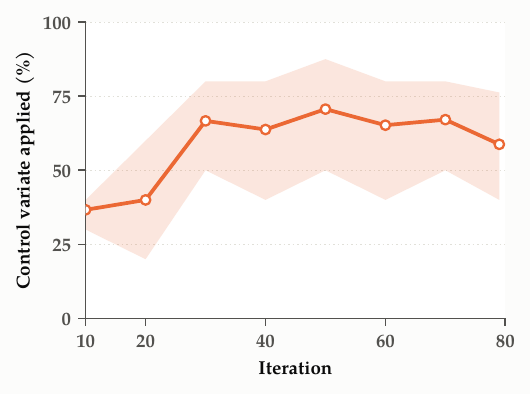}
    \caption{Percentage of minibatches on which the control variate was applied during Franka cube-picking training in \cref{fig:franka-main}. Each marker aggregates the past 10 iterations. The line is the mean over 3 seeds, the band their range.}
    \label{fig:franka_applied_cv_percentage}
\end{figure}

In Isaac Lab simulation of the locomotion tasks, the timing of key events such as robot foot strikes generally aligns more closely across fidelities, whereas grasp events on the physical Franka robot and in simulation often occur at different steps. On the physical robot, per-step correlation between paired statistics is generally low. \Cref{fig:franka-main}(c) shows how aggregating statistics over an episode captures cross-step covariance and increases the correlation, supporting the analysis in \cref{lem:fit-granularity-vi}. \Cref{fig:event_timing} reports further event-timing results and cross-fidelity correlations between paired \ac{hf} and \ac{lf} statistics.

\begin{figure}[htbp]
    \centering
    \includegraphics[width=0.9\linewidth]{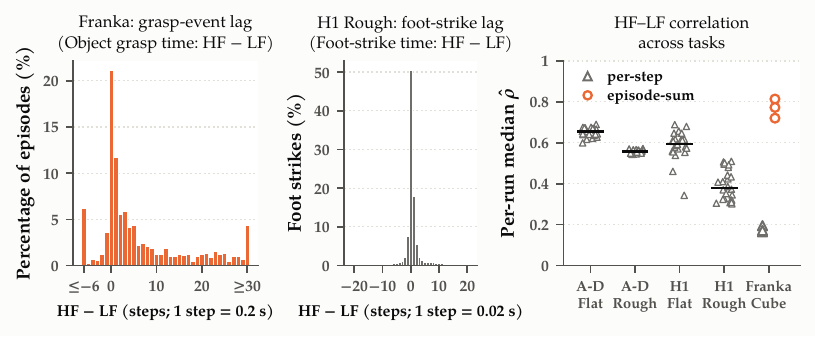}
    \caption{
    Cross-fidelity event timing and correlation between paired statistics. \textbf{Left.} Distribution of grasp-event timing differences between the physical Franka robot and its coupled ManiSkill simulation. The physical robot generally grasps later than its simulated counterpart due to fidelity gaps, causing low correlation between paired per-step statistics.
    \textbf{Middle.} Distribution of foot-strike timing differences between high- and low-fidelity simulations of the H1 Rough task in Isaac Lab. The timing generally aligns more closely than in the Franka case.
    \textbf{Right.} Cross-fidelity correlations between paired \ac{hf} and \ac{lf} statistics across the tasks in \cref{sec:baseline-comparison,sec:franka}. The first four x-axis entries are Isaac Lab simulation tasks and the last is the physical robot task.
    Each marker gives a run's median correlation over training. Gray triangles denote per-step correlations, orange circles denote episode-sum correlations, and black bars denote medians across runs. A-D denotes ANYmal-D.
    }
    \label{fig:event_timing}
\end{figure}

\Cref{fig:event_lag_illustration} illustrates how event timing appears in the paired per-step statistics $(\hfc{\RV^\high_{k,t}}, \lfc{\RV^\low_{k,t}})$ for a physical Franka cube-picking episode and an H1 Rough simulation rollout.

\begin{figure}[htbp]
    \centering
    \includegraphics[width=0.9\linewidth]{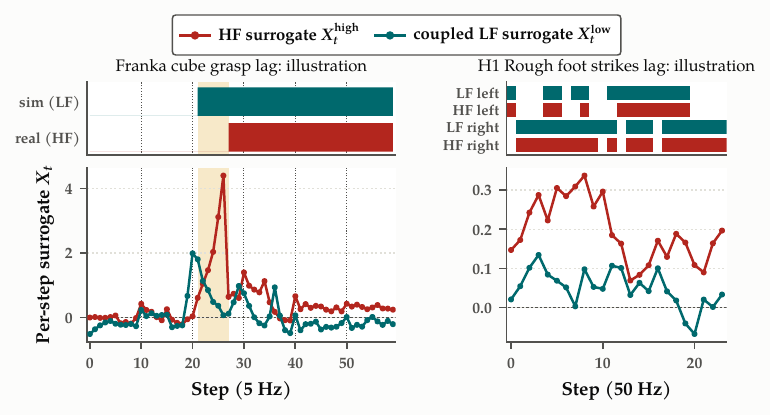}
    \caption{Event indicators (top) and paired per-step surrogate statistics $(\hfc{\RV^\high_{k,t}}, \lfc{\RV^\low_{k,t}})$ (bottom). \textbf{Left.} A Franka cube-picking episode. The upper bars indicate detected grasps, and the shaded interval marks the timing mismatch. \textbf{Right.} An H1 Rough rollout.
    The upper bars indicate left- and right-foot contact. In both panels, the lower curves show the \ac{hf} and coupled \ac{lf} surrogate statistics in red and teal, respectively.}
    \label{fig:event_lag_illustration}
\end{figure}

\Cref{fig:ablation-mechanism128} extends the variance amplification diagnostic of \cref{fig:ablation}(b) to the $\frac{1}{32}\times$ \ac{hf} budget on H1 Rough. As in \cref{fig:ablation}(b), variants without monitoring amplify variance in most iterations. 

\begin{figure}[htbp]
    \centering
    \includegraphics[width=0.5\linewidth]{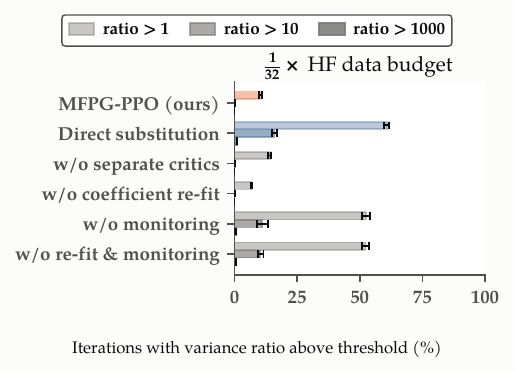}
    \caption{Fraction of training iterations whose variance ratio $\Var[\hfc{\RV^\high_{k,t}}
    + c\,(\lfc{\RV^\low_{k,t}}-\lfc{\hat{\mu}^\low})]
    \,/\,\Var[\hfc{\RV^\high_{k,t}}]$, averaged over minibatches \emph{before} the monitoring check, exceeds $1$, $10$, and $1000$ for the $\frac{1}{32}\times$ \ac{hf} budget case of \cref{fig:ablation}.}
    \label{fig:ablation-mechanism128}
\end{figure}

For the ablations in \cref{sec:ablation}, \cref{fig:ablation-diagnostics} tracks the variance ratio~$\Var[\hfc{\RV^\high_{k,t}} + c\,(\lfc{\RV^\low_{k,t}}-\lfc{\hat{\mu}^\low})]\,/\,\Var[\hfc{\RV^\high_{k,t}}]$, the fraction of minibatches failing the monitoring checks, and the estimated correlation between the paired per-step statistics $(\hfc{\RV^\high_{k,t}}, \lfc{\RV^\low_{k,t}})$. The ablation variants exhibit the same level of cross-fidelity correlation. For display, the curves for minibatch rejection and correlation are smoothed with a centered median over an 800-iteration window.

\begin{figure}[htbp]
    \centering
    \includegraphics[width=0.8\linewidth]{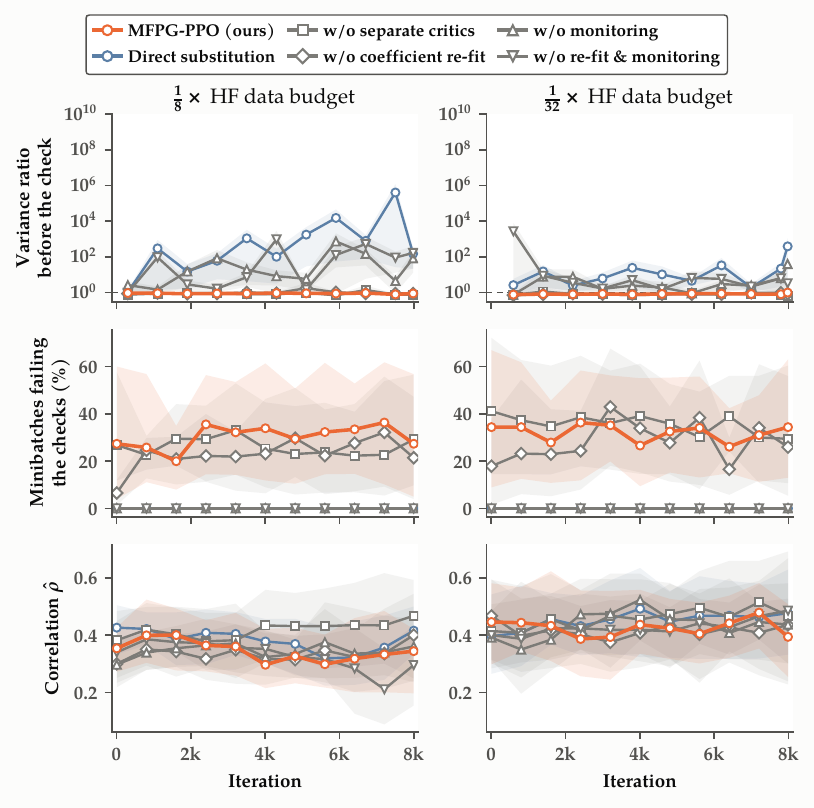}
    \caption{Training diagnostics for the ablations in \cref{sec:ablation} at the $\frac{1}{8}\times$ (left) and $\frac{1}{32}\times$ (right) \ac{hf} budgets on H1 Rough. Rows show the variance ratio~$\Var[\hfc{\RV^\high_{k,t}} + c\,(\lfc{\RV^\low_{k,t}}-\lfc{\hat{\mu}^\low})]\,/\,\Var[\hfc{\RV^\high_{k,t}}]$, the fraction of minibatches failing the monitoring checks, and the estimated correlation between the paired per-step statistics $(\hfc{\RV^\high_{k,t}}, \lfc{\RV^\low_{k,t}})$. Curves show seed means and bands show $95\%$ bootstrapped confidence intervals for the means.}
    \label{fig:ablation-diagnostics}
\end{figure}

\end{document}